\newtheorem{theorem}{Theorem}[section]
\theoremstyle{definition}
\newtheorem{definition}[theorem]{Definition}
\theoremstyle{remark}
\newtheorem{remark}[theorem]{Remark}
\title{PIANO: Physics Informed Autoregressive Network}
\author{
    Mayank Nagda, Jephte Abijuru, Phil Ostheimer, Marius Kloft, Sophie Fellenz
}
\begin{document}

\maketitle
\begin{abstract}
Solving time-dependent partial differential equations (PDEs) is fundamental to modeling critical phenomena across science and engineering. Physics-Informed Neural Networks (PINNs) solve PDEs using deep learning. However, PINNs perform pointwise predictions that neglect the autoregressive property of dynamical systems, leading to instabilities and inaccurate predictions. We introduce Physics-Informed Autoregressive Networks (PIANO)---a framework that redesigns PINNs to model dynamical systems. PIANO operates autoregressively, explicitly conditioning future predictions on the past. It is trained through a self-supervised rollout mechanism while enforcing physical constraints. We present a rigorous theoretical analysis demonstrating that PINNs suffer from temporal instability, while PIANO achieves stability through autoregressive modeling. Extensive experiments on challenging time-dependent PDEs demonstrate that PIANO achieves state-of-the-art performance, significantly improving accuracy and stability over existing methods. We further show that PIANO outperforms existing methods in weather forecasting.

\end{abstract}

\section{Introduction}

\begin{figure*}[t]
    \centering
    \includegraphics[width=0.92\linewidth]{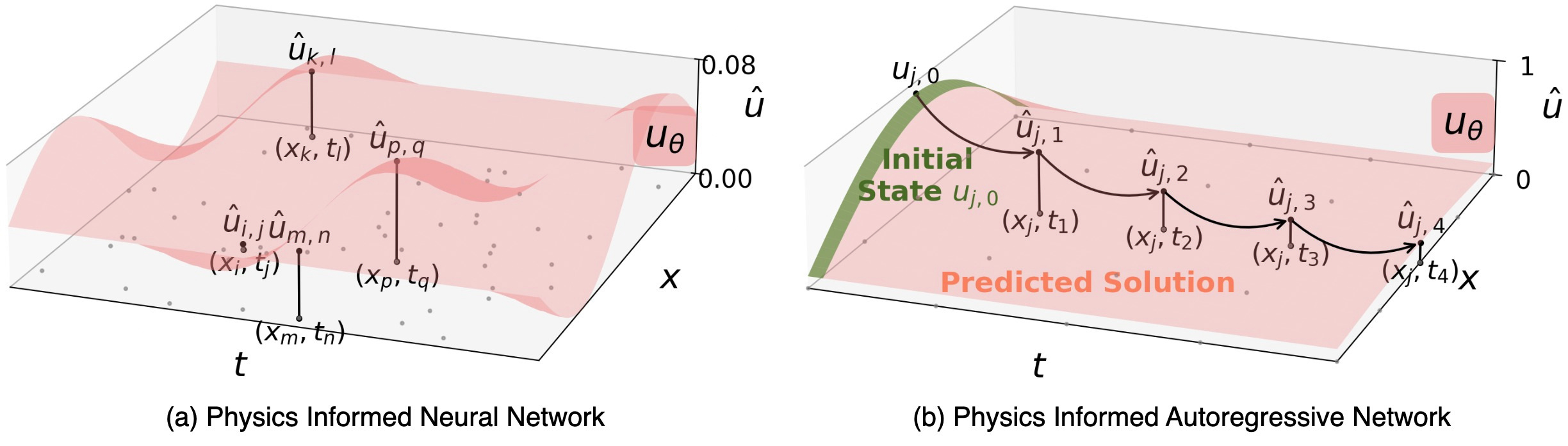}
    \caption{Comparison of (a) standard PINNs and (b) our PIANO framework on solving the Reaction equation. PINNs predict $\hat{u}(x, t)$ independently for each sample $(x, t)$, leading to trivial near-zero solutions. In contrast, PIANO conditions predictions at $t_n$ on the previous state $\hat{u}(x, t_{n-1})$, enabling stable, accurate solution propagation from the known initial state $u(x,0)$.
}
    \label{fig:main-concept}
\end{figure*}

Pierre-Simon Laplace remarked in 1814, \textit{``We may regard the present state of a system as the effect of its past and the cause of its future"} \cite{de1995philosophical}. This observation reflects a core principle in classical physics: the behavior of a dynamical system is determined by its current state, with future states unfolding in an autoregressive manner. Whether predicting the global weather, the rapid dispersion of heat through a solid, or the momentary fluctuations in a fluid's flow, the core challenge lies in understanding how the present state shapes the future. These phenomena are autoregressive and mathematically modeled using time-dependent partial differential equations (PDEs). However, a significant portion of modern machine learning approaches that model dynamical systems by solving PDEs are not autoregressive, i.e., they predict the system’s state at each time step independently, without explicitly conditioning on prior states. As we show in this paper, this can lead to instabilities and inaccurate predictions.

Autoregressive (AR) models are defined by a recursive structure in which the state $u(\cdot, t_n)$ at time \( t_n \) is computed as a function of one or more preceding states: \( u(\cdot, t_n) \coloneqq f(u(\cdot, t_{n-1}), u(\cdot, t_{n-2}), \dots) \). Non-autoregressive (non-AR) models, by contrast, compute the solution independently at each time step from the input coordinates, typically as \( u(\cdot, t_n) \coloneqq f(\cdot, t_n) \). 

Physics-Informed Neural Networks (PINNs) \citep{raissi2019physics} solve time-dependent PDEs by minimizing residuals of the governing equations using neural networks. PINNs are non-AR by design and estimate the state \( u(\cdot, t) \) directly from the coordinates \((\cdot, t)\) in a pointwise fashion. PINNs have been successfully applied to many physical systems, including fluid mechanics \citep{cai2021physics} and cardiovascular flows \citep{raissi2020hidden}, among others. Despite their success, PINNs often struggle to accurately model dynamical systems \citep{wang2024respecting}. To address this, recent extensions have incorporated sequential architectures to better capture temporal dependencies in the input space, as in PINNsFormer \citep{zhao2024pinnsformer} and PINNMamba \citep{xu2025sub}. However, these models remain non-AR, as they are only sequential in the coordinates \((\cdot, t)\) and prediction $u(\cdot, t_n)$ at each time step $t_n$ is made independently of prior state $u(\cdot, t_{n-1})$.

In this work, we introduce PIANO (Physics-Informed Autoregressive Network), an autoregressive physics-informed neural PDE solver. As illustrated in Figure~\ref{fig:main-concept}, PIANO learns a state transition function that is conditioned on its own past predictions. It accurately propagates the known solution from the initial condition forward through time. PIANO offers a robust, stable, and physically plausible framework for simulating dynamical systems. Our contributions are as follows:

\begin{itemize}\item \textbf{Theoretical Foundations.} We present a theoretical analysis on temporal instability in canonical PINNs for time-dependent PDEs and show that PIANO addresses this through an autoregressive formulation ( Section~\ref{sec:theory}).

\item \textbf{A Novel Autoregressive Framework.} We propose PIANO, a physics-informed learning approach that models temporal evolution in dynamical systems by conditioning each prediction on prior states (Section~\ref{sec:main_method}).

\item \textbf{Empirical Validation.} In Section~\ref{sec:exp}, we validate PIANO on challenging time-dependent PDE benchmarks and weather forecasting, demonstrating improved accuracy and stability over existing methods.
\end{itemize}

\section{Related Work}
\paragraph{Physics-Informed Neural Networks (PINNs)}
The concept of using neural networks to solve PDEs dates back several decades \citep{lagaris1998artificial}. The modern incarnation, PINNs \citep{raissi2019physics}, leverages automatic differentiation and deep neural networks to enforce PDE residuals as a soft constraint in the loss function. This has led to widespread applications in diverse scientific domains, including fluid dynamics \citep{cai2021physics}, solid mechanics \citep{haghighat2021physics}, and weather forecasting \citep{verma2024climode} among others. Standard PINNs, typically implemented as pointwise MLPs, map \((x, t)\) directly to \(u(x, t)\) without modeling temporal dependencies, leading to compounding errors in dynamical systems \citep{zhao2024pinnsformer,krishnapriyan2021characterizing}.

\paragraph{Numerical Methods for Solving PDEs}
Classical numerical methods for time-dependent PDEs, such as finite difference, finite element, and finite volume methods, are inherently autoregressive \citep{leveque2007finite,thomas2013numerical}. Methods like Runge-Kutta or Crank-Nicolson explicitly use the solution at time $t_n$ to compute the solution at $t_{n+1}$ \citep{iserles2009first,butcher2016numerical}. This sequential dependency reflects how the state of a system evolves over time through discretized updates defined by the governing equations. 

\paragraph{Improving PINNs for Dynamical Systems}
Addressing the limitations of PINNs in dynamical systems has been an active area of research in recent years. Several strategies have been proposed to enhance the performance of PINNs for time-dependent PDEs. \citet{wang2024respecting} argue that dynamical systems inherently follow an autoregressive structure, and formally demonstrate that standard PINNs violate this principle. They identify this mismatch as a fundamental weakness---one that contributes significantly to the failure of PINNs in accurately modeling temporal evolution. To mitigate this, they propose a causal training algorithm that re-weights the PDE residual loss over time during each iteration, thereby restoring physical causality. Their work calls for new PINN formulations that explicitly respect the autoregressive nature of dynamical systems. Other efforts focus on training strategies such as curriculum learning and sequence-to-sequence scheduling, where models are trained progressively over increasing time horizons \citep{krishnapriyan2021characterizing,penwarden2023unified}. Adaptive techniques have also been proposed, including time-weighted loss functions and dynamic sampling methods \citep{wight2020solving}. \citet{li2024causality} introduce causality-enhanced PINNs, which discretize the PDE loss and leverage transfer learning to adapt model parameters over time. More recently, architectural advances have aimed to capture temporal dependencies more explicitly. These include transformer-based models such as PINNsFormer \citep{zhao2024pinnsformer} and state-space approaches like PINNMamba \citep{xu2025sub}, which incorporate sequential context across collocation points. \citet{wu2024ropinn} proposed Region-Optimized PINNs (RoPINNs), which exploit both temporal and spatial dependencies in the input space to improve predictive accuracy. While these approaches improve temporal consistency, they are not autoregressive. In contrast, we propose a fully autoregressive framework that addresses a critical gap left by prior methods.

\paragraph{Autoregressive Models and Self-Supervision}
Autoregressive models are the backbone of modern sequence modeling, achieving state-of-the-art performance in natural language processing \citep{brown2020language} and time-series forecasting \citep{das2024decoder}. They operate on the principle of predicting the next token/value/state based on a sequence of preceding ones. This powerful autoregressive structure is highly desirable for physical simulations in dynamical systems \citep{wang2024respecting,li2024causality}. However, autoregressive models are typically trained in a supervised manner on large datasets. Our work brings this paradigm to the unsupervised, physics-constrained setting of PINNs. The concept of self-supervised rollout or experience learning is related to techniques in reinforcement learning \citep{hafner2019dream,schrittwieser2020mastering} and generative modeling \cite{ho2020denoising}, where a model learns from its own outputs. PIANO is the first to formulate and solve the challenges of applying autoregressive modeling within a PINN framework.

\section{Theoretically Analyzing Temporal Instability in PINNs}
\label{sec:theory}
This section begins with the preliminaries, followed by a theoretical analysis of error propagation in PINNs for time-dependent PDEs.
\subsection{Preliminaries}
We consider a differential equation defined over a domain \(\Omega \subset \mathbb{R}^d\), with solution \(u: \mathbb{R}^d \to \mathbb{R}^l\). The domain's interior, initial, and boundary subset are denoted by \(\Omega\), \(\Omega_0\), and \(\partial\Omega\), respectively. Differential operators \(\mathcal{O}_\Omega\), \(\mathcal{O}_{\Omega_0}\), and \(\mathcal{O}_{\partial\Omega}\) encode the governing equations (PDEs), initial conditions (ICs), and boundary conditions (BCs). For example, the heat equation is written as \(\mathcal{O}_\Omega(u)(x) = u_t - u_{xx}\). The complete problem formulation is:
\begin{align}
\label{eq:pde_pinn}
\mathcal{O}_\Omega(u)(x) &= 0,\ x \in \Omega; \quad \mathcal{O}_{\Omega_0}(u)(x) = 0,\ x \in \Omega_0;  \\ \notag &\mathcal{O}_{\partial\Omega}(u)(x) = 0,\ x \in \partial\Omega.
\end{align}

PINNs \citep{raissi2019physics} approximate the solution \(u\) with a neural network \(u_\theta\), trained to minimize the residuals of the governing constraints:
\begin{equation}
\label{eq:pinn_loss}
\mathcal{L}(u_\theta) = \sum_{X \in \{\Omega, \Omega_0, \partial\Omega\}} \frac{\lambda_X}{N_X} \sum_{i=1}^{N_X} \left\| \mathcal{O}_X(u_\theta)(x_X^{(i)}) \right\|^2,
\end{equation}
where \(N_X\) is the number of collocation points in subset \(X\), and \(\lambda_X\) weights each term.

\subsection{Uncontrolled Error Propagation in PINNs}

The standard PINN formulation often fails to produce accurate solutions for time-dependent PDEs \cite{zhao2024pinnsformer,xu2025sub}. We argue that this is not simply an optimization issue, but a deeper architectural mismatch. Classical time-stepping schemes like finite difference or Runge--Kutta are explicitly autoregressive: they update the solution at time $ t_{n+1} $ using the known state at $ t_n $, preserving how dynamical systems evolve in time \citep{iserles2009first,butcher2016numerical}. In contrast, PINNs predict each state directly from coordinates $(\cdot,t)$ without conditioning on prior predictions, effectively breaking this autoregressive structure. Viewed through the lens of semigroup theory \citep{pazy2012semigroups}, time-stepping methods approximate an evolution operator that advances the system forward, an operator that PINNs fail to represent. We now formalize this mismatch by defining the evolution operator.

\begin{definition}[Evolution Operator]
\label{eq:evolution_op}
A time-dependent PDE of the form \(\frac{\partial u}{\partial t} = \mathcal{F}(u, t)\) defines a dynamical system. Its solution can be described by an evolution operator, \(\mathcal{G}(\Delta t)\), which maps the state of the system at time \(t_n\) to the state at time \(t_{n+1} = t_n + \Delta t\):
\begin{equation}
    u_{true}(x, t_{n+1}) = \mathcal{G}(\Delta t) [u_{true}(x, t_n)].
\end{equation}
\end{definition}

A stable solver must ensure that errors do not amplify uncontrollably as this operator is applied repeatedly. Let the error of a model \(u_\theta\) at time step \(t_n\) be \(e_n(x) = u_\theta(x, t_n) - u_{true}(x, t_n)\). We can now define the source of instability in non-AR models.

\begin{definition}[One-Step Rollout Error]
\label{eq:one_step_roll}
Given a model's solution \(u_\theta(x, t_n)\), the true physical evolution would yield the state \(\mathcal{G}(\Delta t) [u_\theta(x, t_n)]\) at time \(t_{n+1}\). The one-step rollout error is the discrepancy between the model's actual prediction at \(t_{n+1}\) and the physically evolved state:
\begin{equation}
    \delta_n = \left\| u_\theta(x, t_{n+1}) - \mathcal{G}(\Delta t) [u_\theta(x, t_n)] \right\|_2.
\end{equation}
This error quantifies how poorly the model approximates the true one-step dynamics when initialized from its own prediction at the previous time step.
\end{definition}

\begin{theorem}[Error Propagation in PINNs]
\label{thm:error_prop}
For non-autoregressive PINNs, the error at step \(t_{n+1}\) is bounded by the sum of the propagated error from the previous step \(t_n\) and the one-step rollout error \(\delta_n\):
\begin{equation}
    \|e_{n+1}\|_2 \le L_{\mathcal{G}} \cdot \|e_n\|_2 + \delta_n,
\end{equation}
where \(L_{\mathcal{G}}\) is the Lipschitz constant of the true evolution operator \(\mathcal{G}(\Delta t)\).
\end{theorem}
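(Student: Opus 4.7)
The plan is to prove the recursive bound via an add-and-subtract trick followed by the triangle inequality and the Lipschitz property of the evolution operator $\mathcal{G}(\Delta t)$. The only non-trivial quantity to manipulate is $e_{n+1} = u_\theta(x,t_{n+1}) - u_{true}(x,t_{n+1})$, and the goal is to split it into two contributions that match the two terms on the right-hand side of the claimed inequality.

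First I would insert the intermediate ``physically evolved'' state $\mathcal{G}(\Delta t)[u_\theta(x,t_n)]$, writing
\begin{equation*}
e_{n+1} = \bigl( u_\theta(x,t_{n+1}) - \mathcal{G}(\Delta t)[u_\theta(x,t_n)] \bigr) + \bigl( \mathcal{G}(\Delta t)[u_\theta(x,t_n)] - u_{true}(x,t_{n+1}) \bigr).
\end{equation*}
By Definition of the one-step rollout error, the first parenthesised term has $\|\cdot\|_2$-norm exactly equal to $\delta_n$. For the second, I would invoke Definition of the evolution operator to replace $u_{true}(x,t_{n+1})$ with $\mathcal{G}(\Delta t)[u_{true}(x,t_n)]$, so that the second term becomes $\mathcal{G}(\Delta t)[u_\theta(x,t_n)] - \mathcal{G}(\Delta t)[u_{true}(x,t_n)]$.

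Next I would apply the triangle inequality to bound $\|e_{n+1}\|_2$ by the sum of the two norms, and then use the assumed Lipschitz continuity of $\mathcal{G}(\Delta t)$ with constant $L_{\mathcal{G}}$ to control the second norm:
\begin{equation*}
\bigl\| \mathcal{G}(\Delta t)[u_\theta(x,t_n)] - \mathcal{G}(\Delta t)[u_{true}(x,t_n)] \bigr\|_2 \le L_{\mathcal{G}} \, \|u_\theta(x,t_n) - u_{true}(x,t_n)\|_2 = L_{\mathcal{G}} \, \|e_n\|_2.
\end{equation*}
Combining these two bounds yields exactly $\|e_{n+1}\|_2 \le L_{\mathcal{G}} \|e_n\|_2 + \delta_n$.

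\textbf{Expected obstacle.} There is no substantive obstacle: the argument is a one-line telescoping identity plus Lipschitz continuity. The only thing worth flagging is that the statement implicitly assumes $\mathcal{G}(\Delta t)$ is Lipschitz on a set containing both $u_\theta(x,t_n)$ and $u_{true}(x,t_n)$, which is standard for well-posed evolution problems (and is guaranteed locally by semigroup theory for the class of PDEs considered); I would either assume this at the start of the proof or point the reader to the semigroup reference already cited in the surrounding text.
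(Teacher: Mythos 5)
Your proposal is correct and follows essentially the same route as the paper's proof: insert the intermediate state $\mathcal{G}(\Delta t)[u_\theta(x,t_n)]$, apply the triangle inequality, identify $\delta_n$ in the first term, and invoke the Lipschitz property of $\mathcal{G}(\Delta t)$ for the second. Your final remark about the implicit Lipschitz assumption is apt; the paper's proof in fact opens by stating this assumption explicitly before carrying out the same decomposition.
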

\begin{proof}[Proof Sketch]
By definition, \(e_{n+1} = u_\theta(x, t_{n+1}) - u_{true}(x, t_{n+1})\). Substituting \(u_{true}(x, t_{n+1}) = \mathcal{G}(\Delta t) [u_{true}(x, t_n)]\) and adding and subtracting \(\mathcal{G}(\Delta t) [u_\theta(x, t_n)]\), we have:
\begin{align*}
    e_{n+1} = & \left( u_\theta(x, t_{n+1}) - \mathcal{G}(\Delta t) [u_\theta(x, t_n)] \right) \\
    & + \left( \mathcal{G}(\Delta t) [u_\theta(x, t_n)] - \mathcal{G}(\Delta t) [u_{true}(x, t_n)] \right).
\end{align*}
Applying the triangle inequality and the Lipschitz property of \(\mathcal{G}\) yields the result. The complete proof is provided in Appendix~\ref{app:proof}
\end{proof}

\begin{remark}
Theorem~\ref{thm:error_prop} reveals a critical flaw in non-autoregressive PINNs. The loss function in Eq.~\eqref{eq:pinn_loss} only penalizes the static PDE residual, leaving the one-step rollout error, \(\delta_n\), unconstrained. This allows a new error to be introduced at each time step, which then compounds with previously propagated errors, causing long-term instability.

\end{remark}

\section{PIANO: The Proposed Framework}
\label{sec:main_method}

We propose PIANO (Physics-Informed Autoregressive Network), a framework that addresses the temporal instability in PINNs by integrating autoregressive modeling into the physics-informed paradigm. This section outlines the PIANO architecture and its self-supervised training strategy.

\subsection{Architecture}
\label{sec:architecture}
\begin{figure*}[t]
  \centering
  \includegraphics[width=\linewidth]{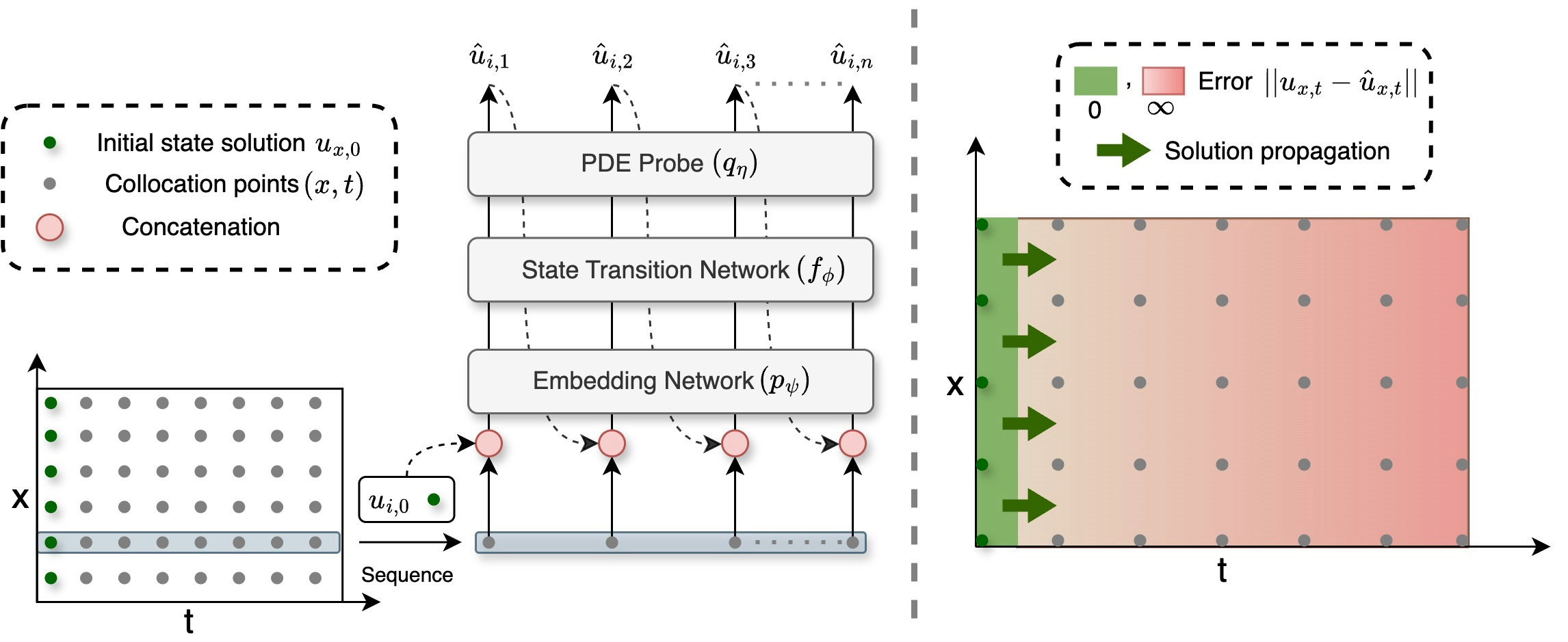}
  \caption{Overview of the proposed method. (Left) The PIANO model processes a sequence of collocation points \( (x_i, t_j)_{j=0}^n \) for each fixed \( x_i \). At each time step \( t_j \), the input—formed by concatenating \( (x_i, t_j) \) with the previous prediction \( \hat{u}(x_i, t_{j-1}) \)—is passed through: an embedding network \( p_\psi \) to produce a high-dimensional embedding, a state transition network \( f_\phi \) to model latent temporal dynamics, and a PDE probe \( q_\eta \) to decode the final solution. (Right) Solution propagation from the initial state. The known initial condition \( u(x, t_0) \) anchors the sequence, enabling stable and accurate learning.
}
  \label{fig:architecture}
\end{figure*}

The architecture is shown in Figure~\ref{fig:architecture}. The domain is defined over coordinates \( (x, t) \in \Omega \subset \mathbb{R}^d \), where \( x \) denotes spatial or physical variables and \( t \) denotes time. The domain is discretized, and for each \( x_i \), a sequence of time points \( (t_j)_{j=0}^M \) is sampled. At each step \( t_j \), the input is constructed by concatenating \( (x_i, t_j) \) with the previous prediction \( \hat{u}(x_i, t_{j-1}) \). The inputs are processed by three components: an Embedding Network, a state-space Transition Network, and a PDE Probe to predict the solution \( \hat{u}(x_i, t_j) \in \mathbb{R}^l \). We now describe each component in detail.
\begin{itemize}
    \item \textbf{Sampling:} For fixed \( x_i \), a sequence of evenly spaced coordinates \( S_{x_i} := [(x_i, t_0), (x_i, t_1), \dots, (x_i, t_M)] \) is sampled from the domain \( \Omega \), where $t_0 \in \Omega_0$ denotes the initial time and $t_M$ corresponds to  the final time step.

    \item \textbf{Input:} At each point \( (x_i, t_j) \in S_{x_i} \), the input vector \( s^i_j := (x_i, t_j, \hat{u}(x_i, t_{j-1})) \in \mathbb{R}^{d+l} \) is formed by concatenating the coordinate with the previous prediction.

    \item \textbf{Embedding Network (\(p_\psi\)):} An embedding network \( p_\psi \) maps each input vector \( s^i_j \) to a higher-dimensional representation \( m^i_j \in \mathbb{R}^k \), enhancing expressiveness and preparing the sequence for further processing.

    \item \textbf{State Transition Network (\( f_\phi \)):} The transition network \( f_\phi \) models latent dynamics by maintaining a hidden state $h$ that summarizes past information. It follows a discrete-time state-space formulation with two steps:

    \[
    \begin{aligned}
    h^i_j &= \sigma(\text{LN}(\mathbf{A}h^i_{j-1} + \mathbf{B}m^i_j)) \\
    o^i_j &= \mathbf{C}h^i_j + \mathbf{D}m^i_j + m^i_j,
    \end{aligned}
    \]
    where \( h^i_j \in \mathbb{R}^k \) is the recurrent hidden state and \( o^i_j \in \mathbb{R}^k \) is the output representation. The function \( f_\phi \) is parameterized by learnable matrices \( \mathbf{A}, \mathbf{B}, \mathbf{C}, \mathbf{D} \in \mathbb{R}^{k \times k} \), along with a nonlinearity \( \sigma \) and layer normalization (LN).

    \item \textbf{PDE Probe (\( q_\eta \)):} Finally, the probe decodes each output:
    \[
    \hat{u}(x_i, t_j) = q_\eta(o^i_j), \quad o^i_j \in \mathbb{R}^k \mapsto \hat{u}(x_i, t_j) \in \mathbb{R}^l.
    \]
    \( q_\eta \) is applied independently at each time step, translating latent features into the PDE solutions.
\end{itemize}

\begin{table*}[t]
\begin{center}
\begin{tabular}{
    l|
    >{\centering\arraybackslash}p{1.2cm}
    >{\centering\arraybackslash}p{1.2cm}
    >{\centering\arraybackslash}p{1.2cm}
    >{\centering\arraybackslash}p{1.2cm}
    >{\centering\arraybackslash}p{1.2cm}
    >{\centering\arraybackslash}p{1.2cm}
    >{\centering\arraybackslash}p{1.2cm}
    >{\centering\arraybackslash}p{1.2cm}
}
\toprule
Model & \multicolumn{2}{c}{Wave} & \multicolumn{2}{c}{Reaction} & \multicolumn{2}{c}{Convection} & \multicolumn{2}{c}{Heat} \\
\cmidrule(lr){2-3} \cmidrule(lr){4-5} \cmidrule(lr){6-7} \cmidrule(lr){8-9}
& rMAE & rRMSE & rMAE & rRMSE & rMAE & rRMSE & rMAE & rRMSE \\
\midrule
PINNs (JCP'19) & 0.4101 & 0.4141 & 0.9803 & 0.9785 & 0.8514 & 0.8989 & 0.8956 & 0.9404\\
QRes (ICDM'21) & 0.5349 & 0.5265 & 0.9826 & 0.9830 & 0.9035 & 0.9245 & 0.8381 & 0.8800\\
FLS (TAI'22) & 0.1020 & 0.1190 & 0.0220 & 0.0390 & 0.1730 & 0.1970 & 0.7491 & 0.7866\\
PINNsFormer (ICLR'24) & 0.3559 & 0.3632 & 0.0146 & 0.0296 & 0.4527 & 0.5217 & 0.2129 & 0.2236\\
RoPINNs (NeurIPS'24) & 0.1650 & 0.1720 & 0.0070 & 0.0170 & 0.6350 & 0.7200 & 0.1545 & 0.1622\\
KAN (ICLR'25) & 0.1433 & 0.1458 & 0.0166 & 0.0343 & 0.6049 & 0.6587 & 0.0901 & 0.1042\\
PINNMamba (ICML'25) & 0.0197 & 0.0199 & 0.0094 & 0.0217 & 0.0188 & 0.0201 & 0.0535 & 0.0583\\
PIANO (ours) & \textbf{0.0057}  & \textbf{0.0059}  & \textbf{0.0001} & \textbf{0.0008} & \textbf{0.0032} & \textbf{0.0104} & \textbf{0.0000} & \textbf{0.0002} \\
\bottomrule
\end{tabular}
\end{center}
\caption{PIANO as a robust and accurate PDE solver across a range of PDE benchmarks. rMAE and rRMSE scores are reported separately for each PDE. Best values are highlighted in bold. PIANO outperforms baselines across all benchmarks.}
\label{tab:overall}
\end{table*}

\subsection{Physics-Informed Experience Learning}

We propose Physics-Informed Experience Learning (PIEL), a training paradigm where the model improves by enforcing physical consistency over its own rollouts. Starting from the known initial condition \( u(x, t_0) \), PIANO autoregressively predicts the trajectory by conditioning each new state on the previous one, with gradients propagated through time via backpropagation through time (BPTT).

For each spatial location \( x_i \), the rollout is:
\[
\hat{u}(x_i, t_j) = u_\theta(x_i, t_j, \hat{u}(x_i, t_{j-1})), \quad j=1,\dots,M,
\]
where \( \theta = \{\psi, \phi, \eta\} \) denotes model parameters.

The computational graph is discrete in time and not connected across neighboring spatial points. Hence, we approximate all PDE derivatives using second-order accurate finite differences applied over the full predicted solution grid. 

For region \( X \in \{\Omega, \partial\Omega\} \), we define the residual energy:
\begin{equation}
    \mathcal{E}_X(x_i, u_\theta) = \frac{1}{M} \sum_{j=1}^{M} \left\| \mathcal{O}_X[\hat{u}](x_i, t_j) \right\|^2,
\end{equation}
where \(\mathcal{O}_X[\hat{u}]\) denotes the PDE residual (via second-order finite differences) for \(X=\Omega\) and the boundary condition error for \(X=\partial\Omega\).

The total training loss aggregates these contributions:
\begin{equation}
\label{eq:piano_loss_energy}
    \mathcal{L}_{\text{PIANO}} = \sum_{X \in \{\Omega, \partial\Omega\}} \frac{\lambda_X}{N_X} \sum_{i=1}^{N_X} \mathcal{E}_X(x_i, u_\theta),
\end{equation}
where \( \lambda_X \) weights PDE and boundary contributions. When data is available, teacher forcing with a data loss can be added, allowing PIEL to operate in both white-box (physics-only) and grey-box (physics+data) settings. Appendix~\ref{app:training} provides the full PIEL training algorithm for completeness.

\begin{remark}[Temporal Stability with PIEL]
By minimizing \(\mathcal{L}_{\text{PIANO}}\), \( u_\theta \) is trained to reduce the one-step rollout error \(\delta_n\) (Theorem~\ref{thm:error_prop}), preventing uncontrolled error accumulation and promoting stable long-term predictions.
\end{remark}

\section{Experiments}
\label{sec:exp}

To empirically demonstrate the effectiveness of PIANO, we evaluate its performance on PDE benchmarks (Section~\ref{sec:pde_tasks}) and a real-world weather forecasting task (Section~\ref{sec:real_world_tasks}).

\subsection{White-box PDE benchmarks}
\label{sec:pde_tasks}

\paragraph{PDE Benchmarks} We evaluate PIANO on four time-dependent PDE benchmarks: the Wave, Reaction, Convection, and Heat equations. These benchmarks are widely used in the literature \citep{zhao2024pinnsformer,wu2024ropinn,xu2025sub} and span diverse numerical challenges: higher-order derivatives (Wave), nonlinear dynamics (Reaction), numerical stiffness (Heat), and transport-dominated behavior prone to numerical diffusion (Convection). Detailed description of each PDE is provided in Appendix~\ref{app:pde_setup}.

\paragraph{Baselines} 
We benchmark PIANO against a broad set of baselines: classical PINN variants (MLP-based PINNs \citep{raissi2019physics}, First-Layer Sine networks (FLS) \citep{wong2022learning}, and Quadratic Residual Networks (QRes) \citep{bu2021quadratic}); recent advances (Kolmogorov–Arnold Networks (KANs) \citep{liu2025kan} and Region-Optimized PINNs (RoPINNs) \citep{wu2024ropinn}); and state-of-the-art sequential models (PINNsFormer \citep{zhao2024pinnsformer} and PINNMamba \citep{xu2025sub}), which are sequential but not autoregressive—ideal for testing PIANO’s autoregressive advantage. Full baseline descriptions are provided in Appendix~\ref{app:baselines}.

\paragraph{Implementation Details} 
PIANO is implemented as a state-space architecture (Section~\ref{sec:main_method}) and trained on a $200 \times 200$ discretized spatio-temporal grid using the AdamW optimizer. All models are trained on approx. same number of samples. For fairness, baselines rely on official implementations with reported hyperparameters and training routines, and we report their published results when available; significance testing was not performed as baseline papers do not report variance.  Regarding computational cost, PIANO is comparable to sequential baselines like PINNsFormer and PINNMamba, while its simpler state-space architecture offers potential efficiency gains. Performance is measured using relative Mean Absolute Error (rMAE) and relative Root Mean Squared Error (rRMSE), which are standard metrics in the PINN literature \cite{xu2025sub}. Full detail on hyperparameters and implementation are provided in Appendix~\ref{app:hyperparameters}.

\paragraph{Quantitative Evaluation} The quantitative results from the PDE benchmarks are presented in Table~\ref{tab:overall}. The analysis reveals a clear performance hierarchy: pointwise PINNs struggle with high errors (validating Theorem~\ref{thm:error_prop}), while sequential models like PINNMamba show improvement, yet PIANO consistently establishes a new state-of-the-art across all four benchmarks. It substantially outperforms all baselines, often reducing the relative error by more than an order of magnitude. For the Reaction and Heat equations, our model's error is reduced to near-zero, highlighting the exceptional stability of the autoregressive approach for non-linear and diffusive systems. Crucially, it also reduces the error of the strongest sequence-based model (PINNMamba) by about threefold on the challenging Wave and Convection equations, demonstrating that a true autoregressive structure is more robust and accurate than mere sequential input processing. These results confirm that PIANO is an accurate and broadly applicable method for solving diverse time-dependent dynamical systems.

\begin{figure*}[t]
  \centering
  \includegraphics[width=\linewidth]{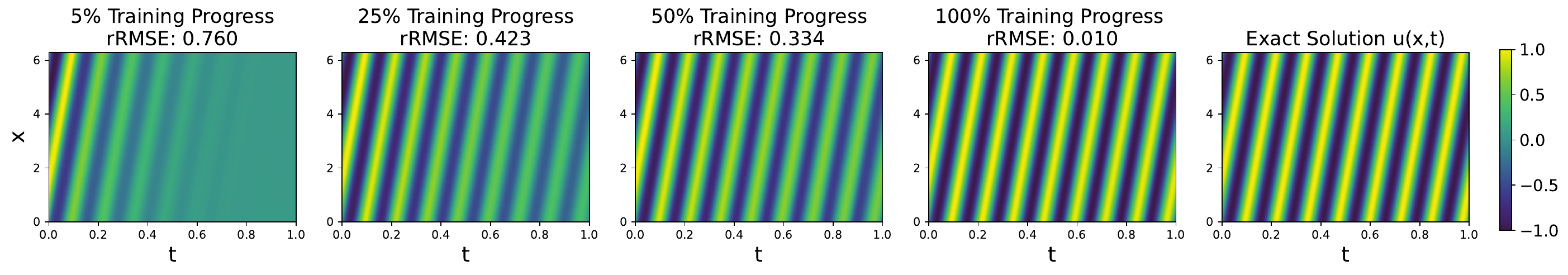}
  \caption{Visualization of PIANO's training dynamics on the convection equation. Each subplot shows the predicted spacetime solution $u(x, t)$ at a given training stage. Early in training (5\%), the solution near the initial condition ($t=0$) is correct, but errors dominate the rest of the domain. As training progresses (25-50\%), PIANO improves its ability to propagate this initial information forward in time, and by the end of training, the prediction becomes indistinguishable from the exact solution, demonstrating the effectiveness of its autoregressive formulation in learning stable temporal propagation.
}
  \label{fig:training_dynamics}
\end{figure*}

\paragraph{Qualitative Evaluation} Figure~\ref{fig:training_dynamics} provides a qualitative visualization of PIANO's training dynamics on the convection equation. The sequence of subplots illustrates how the model's predicted solution evolves at various stages of training. The figure shows how the model progressively learns to propagate the known initial condition forward in time. At 5\% training progress, the prediction near the initial state (\( t=0 \)) is correct by construction, but the model fails to extend that information through the domain, leading to high error (rRMSE 0.760). By 25–50\%, PIANO begins to propagate the initial state more accurately—the diagonal wave structure appears and sharpens as training continues. At 100\%, the model fully captures the transport dynamics, producing a solution visually indistinguishable from the ground truth (rRMSE 0.010). This progression illustrates how PIANO’s autoregressive framework incrementally improves the fidelity of each propagated step, converging from a partially informed state into a globally consistent and physically accurate solution. Additional qualitative results are presented in Appendix~\ref{app:qualitative}.

\subsection{Global Weather Forecasting}
\label{sec:real_world_tasks}
\paragraph{Background}
Weather forecasting has traditionally been dominated by numerical simulations of complex atmospheric physics. Although powerful, these methods are computationally demanding. Recently, deep learning models have emerged as a promising alternative, yet they often function as a ``black-box" that neglect the underlying physical principles. A more robust approach involves integrating physical laws with deep learning approaches. ClimODE~\citep{verma2024climode} is a recent model that successfully applies the physics-informed strategy to weather forecasting. It is built on a core principle from statistical mechanics: weather evolution can be described as a continuous-time advection process, which models the spatial movement and redistribution of quantities like temperature and pressure. By framing the problem as a neural Ordinary Differential Equation (ODE) that adheres to the advection equation, ClimODE enforces value-conserving dynamics, a strong inductive bias that leads to more stable and physically plausible forecasts. With PIANO, we build on the ClimODE framework by introducing an autoregressive training scheme to further enhance predictive accuracy.

\begin{figure*}[t]
  \centering
  \includegraphics[width=\linewidth]{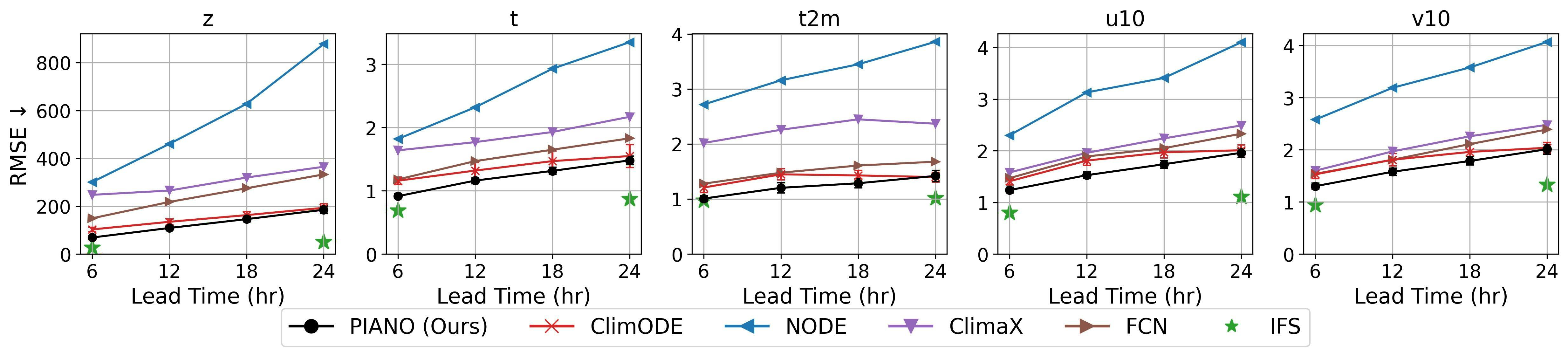}
  \caption{Comparison of global weather forecasting performance on the ERA5 dataset. The plot shows the latitude-weighted RMSE for PIANO against recent baselines over lead times from 6 to 24 hours. The task involves predicting five key atmospheric variables: geopotential (\texttt{z}), atmospheric temperature (\texttt{t}), 2-meter surface temperature (\texttt{t2m}), and the 10-meter U-wind (\texttt{u10}) and V-wind (\texttt{v10}) components. PIANO consistently achieves lower error across all variables, showcasing the effectiveness of its autoregressive physics-informed framework for modeling complex dynamical systems.
}
  \label{fig:forecast_comparison}
\end{figure*}

\paragraph{Setup}
Weather forecasting involves predicting the evolution of key atmospheric variables such as atmospheric temperature ($\texttt{t}$), surface temperature ($\texttt{t2m}$), horizontal wind components ($\texttt{u10}$, $\texttt{v10}$), and geopotential ($\texttt{z}$). We adopt the physics-informed framework of ClimODE~\citep{verma2024climode}, which models weather evolution as a continuous-time process governed by a system of neural ODEs. This system jointly evolves the weather state, denoted by $u(t)$, and a corresponding velocity field, $v(t)$.

The ODE system has two components. The first governs the rate of change of the weather state, $\dot{u}$, and is constrained by the physical advection equation, which ensures that quantities are transported and conserved according to physical principles. The second component governs the rate of change of the velocity field, $\dot{v}$, which is learned by a neural network, $f_\theta$. This network takes as input the current state $u(\tau)$, its spatial gradient $\nabla u(\tau)$, the velocity $v(\tau)$, and spatiotemporal embeddings $\psi$ to determine the acceleration of the flow.

In PIANO, we use this same physics-informed ODE structure but introduce an autoregressive training strategy with teacher forcing to reduce error accumulation. Instead of forecasting the entire trajectory in one step, the model is conditioned on the ground truth from the previous time point. The revised forecast equation for a single time step from $t_i$ to $t_{j}$ is given by:
\begin{equation*}
\begin{bmatrix} \hat{u}(t_{j}) \\ \hat{v}(t_{j}) \end{bmatrix} = \begin{bmatrix} y_i \\ v(t_i) \end{bmatrix} + \int_{t_i}^{t_{j}} \begin{bmatrix} -\nabla \cdot (\hat{u}(\tau)\hat{v}(\tau)) \\ f_{\theta}(\hat{u}(\tau), \nabla \hat{u}(\tau), \hat{v}(\tau), \psi) \end{bmatrix} d\tau,
\end{equation*}
where $y_i$ denotes the observed ground truth state at time $t_i$, $v(t_i)$ is the inferred velocity at that time, and $\nabla \cdot$ is the spatial divergence operator.

We evaluate PIANO on the ERA5 dataset~\cite{rasp2020weatherbench}, a benchmark for global weather forecasting providing 6-hourly reanalysis data at 5.625$^\circ$ resolution for five variables: \texttt{t}, \texttt{t2m}, \texttt{u10}, \texttt{v10}, and \texttt{z}. We compare against several state-of-the-art baselines including Neural ODE (NODE)~\cite{verma2024climode}, FCN~\cite{pathak2022fourcastnet}, ClimaX~\cite{nguyen2023climax}, and the original ClimODE~\cite{verma2024climode}. As a reference, we also report results for the gold-standard Integrated Forecasting System (IFS) \cite{81367} which is one of the most advanced global physics simulation model and has high computational demands. Performance is evaluated using two standard metrics: root mean square error (RMSE) and anomaly correlation coefficient (ACC). RMSE quantifies the absolute prediction error, while ACC measures the correlation between predicted and observed anomalies, capturing the directional accuracy. Both metrics are latitude-weighted to reflect the spherical geometry of the Earth.

\paragraph{Results}
Figure~\ref{fig:forecast_comparison} compares PIANO against baselines on RMSE across five key atmospheric variables over 6 to 24-hour lead times. PIANO consistently achieves a lower RMSE across all variables and horizons, indicating improved forecast accuracy. Notably, its performance gains are most significant at shorter lead times, where the autoregressive use of observed initial states effectively limits error propagation. These results establish PIANO as the state-of-the-art among physics-informed models for ERA5 forecasting, highlighting the benefits of combining an autoregressive training strategy with a physics-informed framework for simulating complex dynamical systems. The complete result table with ACC is provided in Appendix~\ref{app:weather_forecasting}.

\section{Ablation Studies}
We perform ablation studies on the 1D Reaction equation to evaluate the contributions of PIANO’s two key components: its high-order finite difference scheme for computing derivatives and the autoregressive architecture. All reported results are averaged over ten independent runs to ensure statistical reliability. Table~\ref{tab:ablation} summarizes the findings.  

\subsubsection{Finite Difference Scheme}  
PIANO employs finite differences (FD) instead of automatic differentiation (AD) to approximate PDE derivatives. This choice provides stable gradients, lower memory requirements, and allows training with modern first-order optimizers such as AdamW~\cite{kingma2014adam}. In contrast, AD-based PINNs typically rely on quasi-second-order optimizers such as L-BFGS, which are less scalable and poorly suited for stochastic mini-batch training.  

To assess the importance of derivative precision, we compare a second-order FD scheme against a first-order version. The second-order scheme achieves an rRMSE of \( 0.0008 \), while the first-order scheme yields \( 0.0174 \). This difference is statistically significant (two-tailed t-test, \( p < 0.05 \)), demonstrating that higher-order derivative approximations are essential for accurate physics-informed training. Notably, even the first-order version of PIANO performs better than most baselines in Table~\ref{tab:overall}, underscoring the robustness of the approach.  

\subsubsection{Autoregressive Backbone}  
Using the validated second-order FD scheme, we now examine PIANO’s autoregressive architecture. A non-autoregressive baseline (“PIANO (Non-AR)”), which functions like a standard PINN, performs poorly with an rRMSE of \( 0.8010 \), confirming that autoregressive formulation is critical. Introducing progressively stronger recurrent backbones significantly improves performance: an MLP-based variant reaches \( 0.0502 \), a GRU-based model achieves \( 0.0061 \), and the full state-space architecture attains the lowest error of \( 0.0008 \). All improvements are statistically significant (\( p < 0.05 \)).  

These results demonstrate that PIANO’s autoregressive design is fundamental to its accuracy and stability. Even a basic autoregressive MLP variant dramatically outperforms the non-autoregressive baseline, and the full state-space backbone achieves near-zero error, validating its role as the most effective temporal modeling choice.

\begin{table}[t]
\centering
\begin{tabular}{lc}
\toprule
\textbf{Method} & \textbf{rRMSE ($\downarrow$)} \\
\midrule
\multicolumn{2}{c}{\textbf{Finite Difference Schemes}} \\
\midrule
First Order Accurate & 0.0174 $\pm$ 0.0195 \\
Second Order Accurate & 0.0008 $\pm$ 0.0001 \\
\midrule
\multicolumn{2}{c}{\textbf{Autoregressive Backbone}} \\
\midrule
PIANO (Non-AR) & $0.8010 \pm 0.1293$ \\
PIANO (MLP) & 0.0502 $\pm$ 0.0017 \\
PIANO (GRU) & 0.0061 $\pm$ 0.0009 \\
PIANO (SSM) & 0.0008 $\pm$ 0.0001 \\
\bottomrule
\end{tabular}
\caption{Ablation study on the 1D Reaction equation (mean over ten runs). Results isolate the contributions of PIANO’s autoregressive backbone and the precision of the finite difference scheme}
\label{tab:ablation}
\end{table}

\section{Conclusion}
We present PIANO, a physics-informed autoregressive framework for solving time-dependent PDEs. Our theoretical analysis demonstrates that non-autoregressive PINN formulations are unstable and accumulate errors. By aligning model design with the autoregressive property of dynamical systems, PIANO mitigates the error accumulation seen in conventional PINNs and provides a stable foundation for learning physical dynamics. Experiments on a challenging PDE benchmark and global weather forecasting demonstrate that PIANO achieves state-of-the-art accuracy. Beyond these gains, PIANO points to a broader direction: physics-informed learning can benefit significantly from architectures that respect the temporal evolution of the systems they model. Extending this approach to multi-scale processes and real-world scientific applications presents an exciting path for future research.

\section*{Acknowledgements}
Part of this work was conducted within the DFG Research Unit FOR 5359 on Deep Learning on Sparse Chemical Process Data (BU 4042/2-1, KL 2698/6-1, and KL 2698/7-1). MK and SF further acknowledge support by the DFG TRR 375 (ID 511263698), the DFG SPP 2298 (KL 2698/5-2), and the DFG SPP 2331 (FE 2282/1-2, FE 2282/6-1, and KL 2698/11-1). Additional support was provided by the Carl-Zeiss Foundation within the initiatives AI-Care and Process Engineering 4.0, as well as the BMBF award 01\textbar S2407A.

\bibliography{aaai2026}
\clearpage
\appendix
\setcounter{secnumdepth}{2} 
\renewcommand{\thesection}{\Alph{section}}
\renewcommand{\thesubsection}{\thesection.\arabic{subsection}}
\section{Proofs}
\label{app:proof}
\begin{theorem}[Error Propagation in PINNs]
\label{thm:error_prop_app}
For non-autoregressive PINNs, the error at step \(t_{n+1}\) is bounded by the sum of the propagated error from the previous step \(t_n\) and the one-step rollout error \(\delta_n\):
\begin{equation}
    \|e_{n+1}\|_2 \le L_{\mathcal{G}} \cdot \|e_n\|_2 + \delta_n,
\end{equation}
where \(L_{\mathcal{G}}\) is the Lipschitz constant of the true evolution operator \(\mathcal{G}(\Delta t)\).
\end{theorem}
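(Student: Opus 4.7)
The plan is to derive the bound via a classical "add-and-subtract" argument combined with the triangle inequality and the Lipschitz property of $\mathcal{G}(\Delta t)$. This is the standard one-step consistency/stability decomposition used in the convergence theory of numerical time-stepping schemes; in our setting the "consistency" term will turn out to be exactly the one-step rollout error $\delta_n$ from Definition~\ref{eq:one_step_roll}, and the "stability" factor will be $L_{\mathcal{G}}$.

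First I would start from the definition $e_{n+1} = u_\theta(x, t_{n+1}) - u_{true}(x, t_{n+1})$ and rewrite the true state using Definition~\ref{eq:evolution_op}, giving $u_{true}(x, t_{n+1}) = \mathcal{G}(\Delta t)[u_{true}(x, t_n)]$. I would then insert $\pm\,\mathcal{G}(\Delta t)[u_\theta(x, t_n)]$ to split the error into two meaningful pieces:
\begin{align*}
e_{n+1} &= \bigl( u_\theta(x, t_{n+1}) - \mathcal{G}(\Delta t)[u_\theta(x, t_n)] \bigr) \\
&\quad + \bigl( \mathcal{G}(\Delta t)[u_\theta(x, t_n)] - \mathcal{G}(\Delta t)[u_{true}(x, t_n)] \bigr).
\end{align*}
The first parenthesized term is the model's deviation from the true dynamics when launched from its own prior prediction, and the second captures how the dynamics amplify the previously accumulated error.

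Next I would apply the triangle inequality in $\ell_2$ to bound $\|e_{n+1}\|_2$ by the sum of the two norms. By Definition~\ref{eq:one_step_roll}, the first norm equals $\delta_n$. For the second, I would invoke $\|\mathcal{G}(\Delta t)[a] - \mathcal{G}(\Delta t)[b]\|_2 \le L_{\mathcal{G}} \|a - b\|_2$, which immediately yields $L_{\mathcal{G}} \|e_n\|_2$. Combining the two pieces gives the claimed inequality.

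I do not anticipate any genuine obstacle; the only subtle point is ensuring the Lipschitz constant $L_{\mathcal{G}}$ is well-defined on the relevant set of states. For well-posed PDEs this follows from standard semigroup theory: for linear problems $L_{\mathcal{G}}$ is simply the operator norm of the semigroup at time $\Delta t$, and for nonlinear problems one restricts to a bounded invariant set containing both the true and predicted trajectories. I would state this as a mild regularity assumption at the top of the proof rather than bury it inside an estimate, so that the algebraic manipulation remains clean and the result is read as a purely consequence of Definitions~\ref{eq:evolution_op} and~\ref{eq:one_step_roll}.
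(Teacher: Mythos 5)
Your argument is correct and follows exactly the paper's proof: insert $\pm\,\mathcal{G}(\Delta t)[u_\theta(x,t_n)]$, apply the triangle inequality, identify the first term as $\delta_n$ from Definition~\ref{eq:one_step_roll}, and bound the second by the Lipschitz property of $\mathcal{G}$. Your closing remark about where the Lipschitz constant comes from matches the paper's explicit assumption stated at the start of its proof.
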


\begin{proof}
The proof relies on the key assumption that the true evolution operator, $\mathcal{G}(\Delta t)$, is Lipschitz continuous with a constant $L_{\mathcal{G}}$. This property ensures that the operator does not excessively amplify differences between input states, and it is formally stated as:
\begin{equation}
    \| \mathcal{G}(\Delta t)[a] - \mathcal{G}(\Delta t)[b] \|_2 \le L_{\mathcal{G}} \cdot \|a - b\|_2
\end{equation}
for any two system states $a$ and $b$.

We begin with the definition of the error at time step $t_{n+1}$:
\begin{equation}
    e_{n+1} = u_\theta(x, t_{n+1}) - u_{true}(x, t_{n+1}).
\end{equation}
By definition, the true solution at $t_{n+1}$ is given by the evolution operator $\mathcal{G}(\Delta t)$ applied to the true solution at $t_n$. Substituting this into our error expression gives:
\begin{equation}
    e_{n+1} = u_\theta(x, t_{n+1}) - \mathcal{G}(\Delta t) [u_{true}(x, t_n)].
\end{equation}
We now add and subtract the term $\mathcal{G}(\Delta t) [u_\theta(x, t_n)]$. This allows us to connect the model's prediction at $t_{n+1}$ to the evolution of its own prediction from $t_n$:
\begin{align}
    e_{n+1} = & \left( u_\theta(x, t_{n+1}) - \mathcal{G}(\Delta t) [u_\theta(x, t_n)] \right) \notag \\
    & + \left( \mathcal{G}(\Delta t) [u_\theta(x, t_n)] - \mathcal{G}(\Delta t) [u_{true}(x, t_n)] \right).
\end{align}
Taking the $L_2$ norm and applying the triangle inequality ($\|A+B\| \le \|A\| + \|B\|$) yields:
\begin{align}
    \|e_{n+1}\|_2 \le & \left\| u_\theta(x, t_{n+1}) - \mathcal{G}(\Delta t) [u_\theta(x, t_n)] \right\|_2 \notag \\
    & + \left\| \mathcal{G}(\Delta t) [u_\theta(x, t_n)] - \mathcal{G}(\Delta t) [u_{true}(x, t_n)] \right\|_2. \label{eq:triangle}
\end{align}
We recognize the first term on the right-hand side as the definition of the one-step rollout error, $\delta_n$. For the second term, we apply the Lipschitz continuity of the operator $\mathcal{G}$:
\begin{align}
    &\left\| \mathcal{G}(\Delta t) [u_\theta(x, t_n)] - \mathcal{G}(\Delta t) [u_{true}(x, t_n)] \right\|_2 \notag \\
    &\qquad \le L_{\mathcal{G}} \cdot \| u_\theta(x, t_n) - u_{true}(x, t_n) \|_2 \notag \\
    &\qquad = L_{\mathcal{G}} \cdot \|e_n\|_2.
\end{align}
Substituting these two results back into Equation~\eqref{eq:triangle}, we arrive at the final inequality:
\begin{equation}
    \|e_{n+1}\|_2 \le \delta_n + L_{\mathcal{G}} \cdot \|e_n\|_2.
\end{equation}
Rearranging the terms gives the statement of the theorem:
\begin{equation}
    \|e_{n+1}\|_2 \le L_{\mathcal{G}} \cdot \|e_n\|_2 + \delta_n.
\end{equation}
\end{proof}

\section{Existence, Uniqueness and Regularity of Solutions for Evolution Operators}
We study time-dependent PDEs of the form
\begin{equation} \label{eq:pde}
\frac{du}{dt} = -A u + f(u), \quad u(0) = u_0 \in H,
\end{equation}
where $H$ is a complete normed vector space with an inner product (Hilbert space), $-A: D(A) \subset H \to H$ is a linear operator generating a strongly continuous semigroup, and $f: H \to H$ is a nonlinear operator.

\subsection{Semigroup Setup and Mild Solutions}

\begin{definition}[Semigroup Generator]
Let $-A: D(A) \subset H \to H$ be a closed, densely defined linear operator such that $-A$ generates a strongly continuous solution operator $\{S(t)\}_{t \geq 0} \subset \mathcal{L}(H)$; and $\{S(t)\}_{t \geq 0}$ satisfies :
\begin{enumerate}
    \item $S(t)u_0(\cdot) = u(\cdot, t)$
    \item $S(t)$ is a semigroup which satisifies $S(0) = I$ (the identity) and $S(t+s) = S(t)S(s)$ for all $t, s \geq 0$;
    \item $\|S(t)\| \leq 1$ for all $t \geq 0$.
\end{enumerate}
Starting at time $t=0$, we have $u(\cdot, 0) = S(0)u_0(\cdot)$. You obtain $u(\cdot, s+t)$ by first flowing forward in time by $s$ and then flow forward by time $t$ using $u(\cdot,s)$ as initial data.
\end{definition}

\begin{definition}[Mild Solution]
A function $u \in C([0,T], H)$ is a \emph{mild solution} to \eqref{eq:pde} if it satisfies the variation of constants formula:
\begin{equation}
u(\cdot, t) = S(t)u_0(\cdot) + \int_0^t S(t - s) f(u(\cdot, s)) \, ds.
\end{equation}
\end{definition}

\begin{theorem}[Existence and Uniqueness of Mild Solutions]
Suppose $f : H \to H$ is Lipschitz and satisfies:
\begin{align} \label{eqn:lipschitz_condition}
   \|f(u) - f(v)\| \leq L\|u - v\|, \\\quad \|f(u)\| \leq L(1 + \|u\|), \quad \forall u, v \in H. 
\end{align}

Then for any $u_0 \in H$, there exists a unique mild solution $u \in C([0,T], H)$, and:
\[
\|u(t)\| \leq C_T(1 + \|u_0\|), \quad \forall t \in [0,T].
\]
\end{theorem}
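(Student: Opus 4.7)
The plan is to apply the Banach fixed-point theorem to the nonlinear integral operator defined by the variation of constants formula. Specifically, I would define
\[
\Phi: C([0,T], H) \to C([0,T], H), \qquad \Phi(u)(t) = S(t)u_0 + \int_0^t S(t-s)\, f(u(s))\, ds,
\]
and show that $\Phi$ has a unique fixed point in $C([0,T], H)$, which by construction is the mild solution. First I would verify that $\Phi$ is well-defined: strong continuity of $\{S(t)\}_{t\geq 0}$, Lipschitz continuity of $f$, and continuity of $u$ together imply that $s \mapsto S(t-s) f(u(s))$ is continuous and bounded on $[0,t]$, hence Bochner-integrable, and a standard $\varepsilon/3$ argument confirms that $t \mapsto \Phi(u)(t)$ lies in $C([0,T], H)$.

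Next, to obtain a contraction on the entire interval $[0,T]$ in one shot (rather than piecing together local solutions), I would equip $C([0,T], H)$ with the equivalent weighted norm $\|u\|_\lambda := \sup_{t \in [0,T]} e^{-\lambda t}\|u(t)\|$ for a parameter $\lambda > 0$ to be chosen. Using $\|S(t)\| \leq 1$ and the Lipschitz bound \eqref{eqn:lipschitz_condition}, one estimates
\[
\|\Phi(u)(t) - \Phi(v)(t)\| \leq L \int_0^t \|u(s) - v(s)\|\, ds,
\]
and multiplying by $e^{-\lambda t}$ yields $\|\Phi(u) - \Phi(v)\|_\lambda \leq (L/\lambda) \|u - v\|_\lambda$. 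Choosing any $\lambda > L$ makes $\Phi$ a strict contraction, so Banach's fixed-point theorem gives a unique $u \in C([0,T], H)$ with $\Phi(u) = u$, which is precisely the unique mild solution.

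For the a priori bound, I would substitute the fixed point into the integral equation and use the linear growth of $f$ together with $\|S(t)\| \leq 1$:
\[
\|u(t)\| \leq \|u_0\| + L \int_0^t \bigl(1 + \|u(s)\|\bigr)\, ds \leq \|u_0\| + LT + L \int_0^t \|u(s)\|\, ds.
\]
Gronwall's inequality then gives $\|u(t)\| \leq (\|u_0\| + LT)\, e^{LT}$, so setting $C_T := (1 + LT)\, e^{LT}$ yields the stated bound $\|u(t)\| \leq C_T(1 + \|u_0\|)$ for all $t \in [0,T]$.

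The main obstacle I anticipate is the contraction step: a naive Lipschitz estimate in the sup norm only gives a contraction for small $T$ (of order $1/L$), which would force a continuation argument to extend the solution to arbitrary $T$. The weighted-norm trick circumvents this cleanly, but it requires care in verifying that $\|\cdot\|_\lambda$ is equivalent to the sup norm on $[0,T]$ and that the fixed point is independent of $\lambda$. Beyond this, the technical subtleties are largely routine: measurability and Bochner-integrability of $S(t-s) f(u(s))$, continuity of $\Phi(u)$ in $t$, and verifying that the obtained $u$ indeed lies in the original unweighted space $C([0,T], H)$ on which $\Phi$ is defined.
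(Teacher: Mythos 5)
Your proof is correct and reaches the same conclusion as the paper, but via a genuinely different mechanism in the contraction step. The paper works with the plain sup norm on $C([0,T],H)$, obtains the bound $\|\mathcal{J}u - \mathcal{J}v\|_X \le LT\|u-v\|_X$, chooses $T < 1/L$ to force a contraction locally, and then patches local solutions together by ``repeating over intervals'' to reach an arbitrary time horizon. You instead equip $C([0,T],H)$ with the exponentially weighted (Bielecki) norm $\|u\|_\lambda = \sup_{t\in[0,T]} e^{-\lambda t}\|u(t)\|$, which is equivalent to the sup norm, and show $\|\Phi(u)-\Phi(v)\|_\lambda \le (L/\lambda)\|u-v\|_\lambda$, so any $\lambda > L$ yields a contraction on the whole interval $[0,T]$ in a single application of Banach's fixed-point theorem. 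The trade-offs are standard: the paper's continuation argument is conceptually minimal but sweeps under the rug the bookkeeping of gluing solutions at subinterval endpoints (matching data, verifying the concatenation remains a mild solution); your weighted-norm approach avoids the gluing entirely at the small cost of introducing an equivalent norm and verifying the equivalence, which you correctly flag. Both arguments then derive the a priori bound identically, via $\|S(t)\|\le 1$, the linear growth of $f$, and Gr\"onwall; your explicit constant $C_T = (1+LT)e^{LT}$ is a valid choice since $\|u_0\| + LT \le (1+LT)(1+\|u_0\|)$.
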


\begin{proof}
Let $X := C([0,T], H)$ with norm $\|u\|_X := \sup_{t \in [0,T]} \|u(t)\|$. Define the mapping $\mathcal{J} : X \to X$ by
\[
(\mathcal{J}u)(t) := S(t)u_0 + \int_0^t S(t - s) f(u(s)) \, ds.
\]
We show $\mathcal{J}$ is a contraction for small $T$:
\begin{align*}
\|\mathcal{J}u - \mathcal{J}v\|_X &\leq \sup_{t \in [0,T]} \int_0^t \|f(u(s)) - f(v(s))\| ds \\
&\leq L T \|u - v\|_X.
\end{align*}
Choose $T < 1/L$ so that $\mathcal{J}$ is a contraction. By Banach’s fixed-point theorem, there exists a unique fixed point $u \in X$. Repeating over intervals gives global existence.

For the bound, using $\|S(t)\| \leq 1$:
\[
\|u(t)\| \leq \|u_0\| + \int_0^t L(1 + \|u(s)\|) ds.
\]
Apply Grönwall’s inequality to conclude.
\end{proof}

\subsection{Regularity and Temporal Smoothness}

\begin{theorem}[Temporal Regularity]\label{thm:temporal_smoothness}
Let $u_0 \in D(A^\gamma)$ for some $\gamma \in (0,1]$, and let $u(t)$ be the mild solution. Then for any $0 \leq t_1 \leq t_2 \leq T$ and $\epsilon > 0$, there exists $C > 0$ such that:
\[
\|u(t_2) - u(t_1)\| \leq C |t_2 - t_1|^\theta (1 + \|u_0\|_{D(A^\gamma)}),
\]
where $\theta = \min\{\gamma, 1 - \epsilon\}$.
\end{theorem}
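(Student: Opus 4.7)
The plan is to apply the mild solution formula directly and split $u(t_2)-u(t_1)$ into three pieces that can be estimated by standard semigroup techniques. Writing
\[
u(t_2)-u(t_1) = \underbrace{(S(t_2)-S(t_1))u_0}_{(\mathrm{I})} + \underbrace{\int_{t_1}^{t_2}S(t_2-s)f(u(s))\,ds}_{(\mathrm{II})} + \underbrace{\int_0^{t_1}\bigl(S(t_2-s)-S(t_1-s)\bigr)f(u(s))\,ds}_{(\mathrm{III})},
\]
each term will be controlled by a different feature of the setup: term (I) by the fractional regularity of the initial datum, term (II) by the sublinear growth of $f$ together with the a priori bound from the existence theorem, and term (III) by the smoothing action of the semigroup.

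For (I), I would invoke the classical fractional-power identity, which states that for $v\in D(A^\gamma)$ with $\gamma\in(0,1]$ one has $\|(S(h)-I)v\| \le C h^{\gamma}\|A^{\gamma}v\|$. On $D(A)$ this comes from $(S(h)-I)v = -\int_0^h S(s)Av\,ds$, and the general case follows by interpolation; the contraction $\|S(t_1)\|\le 1$ then removes the leading semigroup factor. This yields $\|(\mathrm{I})\| \le C(t_2-t_1)^{\gamma}\|u_0\|_{D(A^{\gamma})}$. For (II), the contraction bound together with $\|f(u)\|\le L(1+\|u\|)$ and the a priori estimate $\|u(s)\|\le C_T(1+\|u_0\|)$ gives immediately $\|(\mathrm{II})\| \le C(1+\|u_0\|)(t_2-t_1)$, which for $t_2-t_1\le T$ is dominated by $C(t_2-t_1)^{1-\epsilon}$.

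Term (III) is the delicate one. The idea is to factor $S(t_2-s)-S(t_1-s) = (S(t_2-t_1)-I)S(t_1-s)$ and then use the smoothing estimate
\[
\bigl\|(S(h)-I)S(r)w\bigr\| \le C\, h^{\alpha}r^{-\alpha}\|w\|, \qquad \alpha\in[0,1],\ r>0,
\]
valid when $-A$ generates an analytic semigroup. Choosing $\alpha = 1-\epsilon$, $h=t_2-t_1$, $r=t_1-s$, $w=f(u(s))$, and using $\|f(u(s))\|\le C(1+\|u_0\|)$, the bound becomes $C(t_2-t_1)^{1-\epsilon}(1+\|u_0\|)(t_1-s)^{-(1-\epsilon)}$, and the singular integrand is integrable since $\int_0^{t_1}(t_1-s)^{-(1-\epsilon)}\,ds = \epsilon^{-1}t_1^{\epsilon}\le \epsilon^{-1}T^{\epsilon}$. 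Combining the three pieces, all exponents dominate $(t_2-t_1)^{\theta}$ with $\theta=\min\{\gamma,1-\epsilon\}$, and the constants collapse into a single $C=C(T,L,\epsilon)$.

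The main obstacle is Step (III): the smoothing estimate tacitly requires $\{S(t)\}$ to be an analytic (sectorial) semigroup, which is implicit in the parabolic examples highlighted in the paper (e.g.\ the heat equation) but is not spelled out among the three conditions listed in the semigroup definition. I would state the analyticity hypothesis explicitly at the start of the proof, or alternatively replace it with the weaker assumption that $S(t)$ maps $H$ into $D(A^{1-\epsilon})$ with $\|A^{1-\epsilon}S(t)\|\le C t^{-(1-\epsilon)}$, which is exactly what the argument uses. Without such a smoothing property, $(\mathrm{III})$ cannot be pushed beyond continuity in $t$, and the $1-\epsilon$ factor in the statement would be unattainable; so flagging and justifying this hypothesis is essential before the rest of the computation proceeds mechanically.
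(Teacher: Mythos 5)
Your decomposition is the same as the paper's: $(\mathrm{I})$ matches the paper's term $I = S(t_2)u_0 - S(t_1)u_0$, and $(\mathrm{II})$, $(\mathrm{III})$ are exactly the paper's $II_2 = \int_{t_1}^{t_2}S(t_2-s)f\,ds$ and $II_1 = \int_0^{t_1}[S(t_2-s)-S(t_1-s)]f\,ds$. Your treatment of $(\mathrm{I})$ (fractional-power identity giving $|t_2-t_1|^\gamma$) and of $(\mathrm{II})$ (contraction, sublinear growth of $f$, a~priori bound) coincides with what the paper's sketch does. So this is not a different route; it is the same route carried out with more care.

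What you add, and do well, is to isolate the real sticking point in $(\mathrm{III})$. The paper's sketch simply asserts that ``continuity of $S(t)$ and boundedness of $f(u(s))$'' give $\|II\|\le C|t_2-t_1|^\theta(1+\|u_0\|)$, but strong continuity of a $C_0$-semigroup alone yields only $\|II_1\|\to 0$, not a Hölder rate. Your diagnosis is correct: obtaining the $|t_2-t_1|^{1-\epsilon}$ rate for $II_1$ requires a smoothing estimate of the form $\|(S(h)-I)S(r)\|\lesssim h^{\alpha}r^{-\alpha}$, which is a property of analytic (sectorial) semigroups and is not implied by the three conditions listed in the paper's semigroup definition. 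So the hypothesis you flag is genuinely missing from the theorem as stated, and the paper's own proof sketch has the same gap; your proposal is the one that makes the needed assumption explicit. The only thing I would tighten is the interpolation step for $(\mathrm{I})$ when $\gamma<1$: the estimate $\|(S(h)-I)v\|\le Ch^{\gamma}\|A^{\gamma}v\|$ for fractional $\gamma$ also uses the analytic structure (via $(S(h)-I) = -\int_0^h A S(s)\,ds$ and $\|A^{1-\gamma}S(s)\|\lesssim s^{\gamma-1}$), so the same analyticity hypothesis is doing double duty; it is worth saying so, since for a general $C_0$-contraction semigroup $D(A^\gamma)$ need not even be well-defined for non-integer $\gamma$.
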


\begin{proof}[Sketch]
Write $u(t_2) - u(t_1) = I + II$, with:
\begin{align*}
    I &:= S(t_2)u_0 - S(t_1)u_0, \quad \\II &:= \int_0^{t_2} S(t_2 - s)f(u(s))ds - \int_0^{t_1} S(t_1 - s)f(u(s))ds.
\end{align*}

Estimate $I$ via semigroup regularity:
\[
\|I\| \leq C |t_2 - t_1|^\gamma \|u_0\|_{D(A^\gamma)}.
\]
For $II$, split as:
\begin{align*}
   II_1 &:= \int_0^{t_1} [S(t_2 - s) - S(t_1 - s)] f(u(s)) ds, \\
   \quad II_2 &:= \int_{t_1}^{t_2} S(t_2 - s) f(u(s)) ds, 
\end{align*}

and use continuity of $S(t)$ and boundedness of $f(u(s))$ to obtain:
\[
\|II\| \leq C |t_2 - t_1|^\theta (1 + \|u_0\|).
\]
\end{proof}

\subsection{Evolution Operator Approximation}

\begin{definition}[Evolution Operator $\mathcal{G}$]
Let $\Delta t > 0$. Define a learned operator $\mathcal{G} : H \to H$ such that:
\[
\mathcal{G}(u_n) \approx u_{n+1}, \quad \text{where } u_n \approx u(n\Delta t), \ u_{n+1} \approx u((n+1)\Delta t).
\]
Let $\Phi_{\Delta t}(u)$ denote the exact flow:
\[
\Phi_{\Delta t}(u) := S(\Delta t)u + \int_0^{\Delta t} S(\Delta t - s) f(u(s)) ds.
\]
\end{definition}

\begin{theorem}[Error Propagation of $\mathcal{G}$] \label{thm:error_propagation}
Let $u(\cdot, t)$ be the mild solution to \eqref{eq:pde} with $u_0 \in D(A^\gamma)$ for some $\gamma \in (0,1]$. Suppose:
\[
\|\mathcal{G}(u) - \Phi_{\Delta t}(u)\| \leq \varepsilon(\Delta t), \quad \forall u \in \mathcal{B} \subset H.
\]
Define $\tilde{u}_0 = u_0$, and recursively $\tilde{u}_{n+1} = \mathcal{G}(\tilde{u}_n)$. Then for $u_n := u(n \Delta t)$,
\[
\|\tilde{u}_n - u_n\| \leq C_T \left( \varepsilon(\Delta t) + \Delta t^\theta (1 + \|u_0\|_{D(A^\gamma)}) \right),
\]
where $\theta = \min\{\gamma, 1 - \epsilon\}$ and $C_T$ depends on $T$ and $L$.
\end{theorem}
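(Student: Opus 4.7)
The plan is to carry out a discrete Gronwall argument of Lady Windermere's fan type, adapted to semigroup-based evolution equations. My first observation is that, by uniqueness of mild solutions together with the semigroup property of $\{S(t)\}$, the exact sampled trajectory satisfies $u_{n+1} = \Phi_{\Delta t}(u_n)$ identically. Hence the learned recursion $\tilde{u}_{n+1} = \mathcal{G}(\tilde{u}_n)$ can be compared against an exact discrete dynamics rather than an approximate one, so the global task reduces to controlling how locally introduced errors accumulate under iterated application of $\Phi_{\Delta t}$ versus $\mathcal{G}$.

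Next I would use the telescoping decomposition
\begin{equation*}
    \tilde{u}_{n+1} - u_{n+1} = \bigl[\mathcal{G}(\tilde{u}_n) - \Phi_{\Delta t}(\tilde{u}_n)\bigr] + \bigl[\Phi_{\Delta t}(\tilde{u}_n) - \Phi_{\Delta t}(u_n)\bigr].
\end{equation*}
The first bracket is the local learning error and is at most $\varepsilon(\Delta t)$ by assumption, provided $\tilde{u}_n\in\mathcal{B}$. For the second bracket I would derive a Lipschitz estimate for the exact flow by differencing the mild-solution formulas for two initial data, using $\|S(t)\|\le 1$ together with the Lipschitz bound on $f$ and Gronwall's inequality, obtaining $\|\Phi_{\Delta t}(u)-\Phi_{\Delta t}(v)\|\le e^{L\Delta t}\|u-v\|$. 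The one-step recursion $\|e_{n+1}\|\le \varepsilon(\Delta t)+e^{L\Delta t}\|e_n\|$ then iterates to $\|e_n\|\le\varepsilon(\Delta t)\,(e^{LT}-1)/(e^{L\Delta t}-1)$ on $[0,T]$, which I would absorb into a constant $C_T$ depending on $T$ and $L$.

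The main obstacle is locating the origin of the $\Delta t^\theta (1+\|u_0\|_{D(A^\gamma)})$ term, since it does not appear in the pure learning-error decomposition above. I expect it to enter through Theorem~\ref{thm:temporal_smoothness}: the implicit ``one-step target'' behind $\varepsilon(\Delta t)$ is the temporal increment $u((n+1)\Delta t)-u(n\Delta t)$, which for $u_0\in D(A^\gamma)$ is only Hölder-$\theta$ in time and so of size $C\Delta t^\theta(1+\|u_0\|_{D(A^\gamma)})$. This regularity deficit enters either as an extra contribution to the effective per-step error or as the price of replacing the continuous flow by a discrete increment when controlling $\mathcal{G}$ off-trajectory. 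Propagating it through the same discrete Gronwall machinery yields an additive $\Delta t^\theta$ contribution, again absorbed into $C_T$.

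Finally, a short inductive argument is needed to ensure that $\tilde{u}_n\in\mathcal{B}$ for all $n\le T/\Delta t$, so that the hypothesis on $\varepsilon(\Delta t)$ remains applicable throughout the rollout; this follows for sufficiently small $\Delta t$ from the a priori bound $\|u(t)\|\le C_T(1+\|u_0\|)$ on the exact trajectory together with the error estimate being proved. Combining the two error sources and iterating then delivers the claimed bound.
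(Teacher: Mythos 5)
Your decomposition
\begin{equation*}
\tilde{u}_{n+1} - u_{n+1} = \bigl[\mathcal{G}(\tilde{u}_n) - \Phi_{\Delta t}(\tilde{u}_n)\bigr] + \bigl[\Phi_{\Delta t}(\tilde{u}_n) - \Phi_{\Delta t}(u_n)\bigr]
\end{equation*}
together with the Lipschitz estimate on $\Phi_{\Delta t}$ and the discrete Gronwall iteration is precisely the paper's inductive argument, and the invariance check $\tilde{u}_n\in\mathcal{B}$ you flag at the end is a real (and tacitly assumed) requirement. The one place where you diverge is exactly the place you flagged as your ``main obstacle.'' You assert that uniqueness of mild solutions plus the semigroup property force $u_{n+1}=\Phi_{\Delta t}(u_n)$ identically, so the decomposition telescopes cleanly into two terms and no $\Delta t^\theta$ can appear; you then speculate about where it might come from. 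The paper instead keeps the \emph{three}-term split
\begin{equation*}
\tilde{u}_{n+1} - u_{n+1} = \bigl[\mathcal{G}(\tilde{u}_n) - \Phi_{\Delta t}(\tilde{u}_n)\bigr] + \bigl[\Phi_{\Delta t}(\tilde{u}_n) - \Phi_{\Delta t}(u_n)\bigr] + \bigl[\Phi_{\Delta t}(u_n) - u_{n+1}\bigr],
\end{equation*}
treats the last bracket as a nonzero \emph{consistency error}, and bounds it by Theorem~\ref{thm:temporal_smoothness}, which is the sole source of $\Delta t^\theta (1+\|u_0\|_{D(A^\gamma)})$. So the $\Delta t^\theta$ term has a concrete home in the paper's argument; it is not an off-trajectory effect or an implicit retarget of $\varepsilon$, and you should state the consistency bracket explicitly rather than speculate about it.

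That said, your objection is not a misunderstanding but a genuine point of tension in the paper. The definition $\Phi_{\Delta t}(u) := S(\Delta t)u + \int_0^{\Delta t} S(\Delta t - s) f(u(s))\,ds$ is notationally ambiguous ($u(s)$ is undefined when $u$ is merely an element of $H$). Read as the exact solution flow map, $\Phi_{\Delta t}(u_n) = u_{n+1}$ and the consistency bracket is identically zero, so your two-term analysis gives the sharper conclusion $\|\tilde{u}_n - u_n\|\le C_T\,\varepsilon(\Delta t)$, which trivially implies the stated bound. For the paper's bound to be tight in the way it is written, $\Phi_{\Delta t}$ must be some non-exact one-step map (e.g.\ a frozen-coefficient or one-step-interpolated update), in which case $\Phi_{\Delta t}(u_n)\neq u_{n+1}$ and the temporal regularity theorem is genuinely needed. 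Either way, the mechanical content of your proof is the same as the paper's once you add the explicit consistency bracket and its Theorem~\ref{thm:temporal_smoothness} bound; replace the two speculative paragraphs with that single step.
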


\begin{proof}
We proceed by induction.

Base case: $\tilde{u}_0 = u_0 \Rightarrow \|\tilde{u}_0 - u_0\| = 0$.

Inductive step:
\begin{align*}
\|\tilde{u}_{n+1} - u_{n+1}\| &\leq \|\mathcal{G}(\tilde{u}_n) - \Phi_{\Delta t}(\tilde{u}_n)\| \\& + \|\Phi_{\Delta t}(\tilde{u}_n) - \Phi_{\Delta t}(u_n)\| \\
&\leq \varepsilon(\Delta t) + L_\Phi \|\tilde{u}_n - u_n\|.
\end{align*}
Apply recursively and use the regularity bound from \ref{thm:temporal_smoothness} for $\|u_{n+1} - \Phi_{\Delta t}(u_n)\|$ to close the estimate.
\end{proof}
While PIANO mitigates the recurrence  error by minimizing the one-step rollout term, Theorem \ref{thm:error_propagation}  proves that autoregressive models that approximate the true evolution operator enjoy  a provably bounded global error.

\section{Training Algorithm}
\label{app:training}

\begin{algorithm}[h]
\caption{Training PIANO via Experience Learning}
\label{alg:piano-training}
\begin{algorithmic}[1]
\STATE Initialize model parameters \( \psi, \phi, \eta \).
\FOR{each training iteration}
    \STATE Sample a batch of spatial coordinates \( \{x_i\}_{i=1}^{N_x} \subset \Omega \cup \partial\Omega \).
    \STATE Set initial state from the known condition: \( \hat{u}(x_i, t_0) \leftarrow u(x_i, t_0) \) for all \( i \).
    \STATE Initialize hidden state \( h^i_0 \leftarrow \mathbf{0} \) for all \( i \).
    \FOR{each time step \( t_j \), for \( j = 1, \dots, M \)}
        \STATE Form input vector: \( s^i_j = (x_i, t_j, \hat{u}(x_i, t_{j-1})) \).
        \STATE Compute embedding: \( m^i_j = p_\psi(s^i_j) \).
        \STATE Update hidden state and output representation: \( (h^i_j, o^i_j) = f_\phi(h^i_{j-1}, m^i_j) \).
        \STATE Predict solution: \( \hat{u}(x_i, t_j) = q_\eta(o^i_j) \).
        \STATE Compute loss using Eq. \ref{eq:piano_loss_energy}
    \ENDFOR
    \STATE Normalize loss over time steps and batch size.
    \STATE Update parameters \( \psi, \phi, \eta \).
\ENDFOR
\end{algorithmic}
\end{algorithm}

This training procedure, which we refer to as Physics-Informed Experience Learning (PIEL), optimizes the model to generate physically consistent solution trajectories based on its own predictions. The experience learning component comes from the autoregressive rollout described in Algorithm~\ref{alg:piano-training}: for each spatial coordinate \( x_i \) in a batch, the model generates an entire temporal trajectory starting from the known initial condition \( u(x_i, t_0) \). Each subsequent prediction \( \hat{u}(x_i, t_j) \) is conditioned on the model's own previous output \( \hat{u}(x_i, t_{j-1}) \), which forces the model to learn from its own generated experience.

The physics-informed component governs the optimization process. Instead of comparing the predicted rollout to a ground-truth solution, the loss function measures how well the generated trajectory satisfies the governing physical laws. This is done by evaluating the residuals of the underlying partial differential equation (PDE), as well as the errors in satisfying the boundary conditions. These residuals are computed over the full predicted spatiotemporal grid, using finite difference approximations for both spatial and temporal derivatives. The total loss is then aggregated over all points and time steps in the trajectory.

The model parameters are updated through backpropagation through time (BPTT). By maintaining gradient flow through the full autoregressive sequence, the model learns a stable state transition function, or evolution operator, that captures long-range temporal dependencies and adheres to the physical constraints. This end-to-end training on physically constrained rollouts directly minimizes the one-step rollout error discussed in Theorem~3.4. As a result, the model mitigates error accumulation commonly found in non-autoregressive approaches and produces stable, accurate long-term predictions.

\section{Additional Details on PDE Benchmark}
\label{app:implementation_details}

\subsection{PDE Setup}
\label{app:pde_setup}

We evaluate PIANO on four canonical time-dependent partial differential equations (PDEs) that are standard benchmarks in the physics-informed machine learning literature. For each benchmark, we define the governing equation, the domain, the initial conditions (ICs), and the boundary conditions (BCs). The analytical solution for each PDE is provided for the purpose of evaluating model accuracy. Table~\ref{tab:pde_challenges} summarizes the unique numerical challenges posed by each equation, which test different aspects of a solver's stability and accuracy.

\noindent\textbf{1. Wave Equation:}
The 1D wave equation models phenomena like vibrating strings and sound waves. It is a second-order hyperbolic PDE.
\begin{itemize}
    \item \textbf{Equation}: $\frac{\partial^2 u}{\partial t^2} = c^2 \frac{\partial^2 u}{\partial x^2}$, with $c=2.0$.
    \item \textbf{Domain}: $(x, t) \in [0, 1] \times [0, 1]$.
    \item \textbf{Initial Conditions}:
    \begin{itemize}
        \item $u(x, 0) = \sin(\pi x) + 0.5 \sin(3\pi x)$.
        \item $\frac{\partial u}{\partial t}(x, 0) = 0$.
    \end{itemize}
    \item \textbf{Boundary Conditions}: $u(0, t) = 0$ and $u(1, t) = 0$ (Dirichlet).
    \item \textbf{Analytical Solution}: $u(x, t) = \sin(\pi x)\cos(2\pi t) + 0.5\sin(3\pi x)\cos(6\pi t)$.
\end{itemize}

\noindent\textbf{2. Reaction Equation:}
This equation models systems with nonlinear reaction dynamics, common in chemistry and biology. It is a first-order nonlinear PDE.
\begin{itemize}
    \item \textbf{Equation}: $\frac{\partial u}{\partial t} = 5u(1-u)$.
    \item \textbf{Domain}: $(x, t) \in [0, 2\pi] \times [0, 1]$.
    \item \textbf{Initial Condition}: $u(x, 0) = \exp\left(-\frac{(x-\pi)^2}{2(\pi/4)^2}\right)$.
    \item \textbf{Boundary Conditions}: $u(0, t) = u(2\pi, t)$ (Periodic).
    \item \textbf{Analytical Solution}: Let $h(x) = u(x,0)$. Then $u(x,t) = \frac{h(x)e^{5t}}{h(x)e^{5t} + 1 - h(x)}$.
\end{itemize}

\noindent\textbf{3. Convection Equation:}
A first-order hyperbolic PDE that models the transport of a quantity. It is known for being sensitive to numerical diffusion, where sharp features can be smoothed out by inaccurate solvers.
\begin{itemize}
    \item \textbf{Equation}: $\frac{\partial u}{\partial t} + c \frac{\partial u}{\partial x} = 0$, with $c=50$.
    \item \textbf{Domain}: $(x, t) \in [0, 2\pi] \times [0, 1]$.
    \item \textbf{Initial Condition}: $u(x, 0) = \sin(x)$.
    \item \textbf{Boundary Conditions}: $u(0, t) = u(2\pi, t)$ (Periodic).
    \item \textbf{Analytical Solution}: $u(x, t) = \sin(x - ct)$.
\end{itemize}

\noindent\textbf{4. Heat Equation:}
The heat equation is a second-order parabolic PDE that describes heat distribution in a given region over time. It is a classic example of a diffusive system, which presents challenges related to numerical stiffness.
\begin{itemize}
    \item \textbf{Equation}: $\frac{\partial u}{\partial t} = \alpha \frac{\partial^2 u}{\partial x^2}$, with $\alpha=0.1$.
    \item \textbf{Domain}: $(x, t) \in [0, 1] \times [0, 1]$.
    \item \textbf{Initial Condition}: $u(x, 0) = \sin(\pi x)$.
    \item \textbf{Boundary Conditions}: $u(0, t) = 0$ and $u(1, t) = 0$ (Dirichlet).
    \item \textbf{Analytical Solution}: $u(x, t) = \sin(\pi x)e^{-\alpha\pi^2 t}$.
\end{itemize}

\begin{table*}[h]
\centering
\begin{tabular}{@{}ll@{}}
\toprule
\textbf{PDE Benchmark} & \textbf{Primary Numerical Challenge} \\
\midrule
\textbf{Wave Equation} & \makecell[l]{Accurate handling of second-order temporal and spatial derivatives. \\ Propagating wave solutions without numerical dispersion.} \\[1em]
\textbf{Reaction Equation} & \makecell[l]{Modeling stiff nonlinear dynamics where solutions change rapidly. \\ Capturing exponential growth accurately.} \\[1em]
\textbf{Convection Equation} & \makecell[l]{Minimizing numerical diffusion to preserve the shape of the \\ propagating wave. High wave speed ($c=50$) makes it challenging.} \\[1em]
\textbf{Heat Equation} & \makecell[l]{Handling diffusive processes and numerical stiffness, which can \\ require very small time steps for traditional explicit solvers.} \\
\bottomrule
\end{tabular}
\caption{Summary of numerical challenges presented by each PDE benchmark.}
\label{tab:pde_challenges}
\end{table*}

\subsection{Baselines}
\label{app:baselines}

To rigorously evaluate \textbf{PIANO}, we benchmark it against a comprehensive suite of models that represent the cutting edge of physics-informed learning. These baselines were chosen to cover classical architectures, recent architectural innovations, and state-of-the-art sequential models, providing a multi-faceted comparison.

\subsubsection{Baselines}
\begin{itemize}
    \item \textbf{Canonical PINN}: This is the foundational framework introduced by \citet{raissi2019physics}. It employs a Multilayer Perceptron (MLP) as a universal function approximator that takes spatio-temporal coordinates $(x,t)$ as input and outputs the corresponding solution $u(x,t)$. The network is trained by minimizing a loss function composed of the PDE residuals, initial conditions, and boundary conditions, which are calculated using automatic differentiation. Its primary limitation, which motivates our work, is its point-wise prediction mechanism, which neglects temporal dependencies and often fails to propagate initial conditions accurately, leading to "failure modes" where the model converges to overly smooth or incorrect solutions.

    \item \textbf{QRes \& FLS}: These models represent architectural improvements over the standard MLP. \textbf{QRes (Quadratic Residual Networks)} introduces quadratic residual connections to enhance the model's capacity for solving complex physics problems \citep{bu2021quadratic}. \textbf{FLS (First-Layer Sine)} networks use a sinusoidal activation function in the first layer \citep{wong2022learning}. This provides a strong inductive bias for learning periodic or high-frequency patterns, though its effectiveness can be limited to problems where such prior knowledge of the solution's behavior is applicable.

    \item \textbf{RoPINN (Region Optimized PINN)}: This framework addresses a fundamental deficiency in the standard PINN training paradigm \citep{wu2024ropinn}. Instead of optimizing the loss on a finite set of scattered points, RoPINN extends the optimization to the continuous neighborhood regions around these points. This is achieved efficiently through a Monte Carlo sampling method within a "trust region" that is adaptively calibrated during training. This approach is designed to reduce generalization error and better satisfy high-order PDE constraints without requiring additional, costly gradient calculations.

    \item \textbf{KAN (Kolmogorov-Arnold Networks)}: Representing a recent breakthrough in neural network architecture, KANs are included as an advanced physics-informed backbone \citep{liu2025kan}. They offer a powerful alternative to traditional MLPs and have demonstrated strong performance in several scientific machine learning tasks.

    \item \textbf{Sequential, Non-Autoregressive Models}: To highlight the specific advantage of PIANO's autoregressive nature, we compare against the most advanced sequential models.
    \begin{itemize}
        \item \textbf{PINNsFormer} is a Transformer-based framework designed specifically to capture temporal dependencies \citep{zhao2024pinnsformer}. Its key mechanism is the "Pseudo Sequence Generator," which transforms a point-wise input $(x,t)$ into a short temporal sequence, $\{[x, t], [x, t+\Delta t], ...\}$, which is then processed by a multi-head attention mechanism. While it processes information sequentially, its predictions at each time step are made independently of the model's own previous predictions, distinguishing it from PIANO's true autoregressive approach.
        \item \textbf{PINNMamba} utilizes a State Space Model (SSM) to serve as a ``continuous-discrete articulation," aiming to resolve the mismatch between continuous PDEs and discrete training points \citep{xu2025sub}. It employs ``sub-sequence modeling" and a contrastive alignment loss to combat the simplicity bias of neural networks and propagate initial conditions. Like PINNsFormer, it is a sequential model, but it is not autoregressive in the way PIANO is.
    \end{itemize}
\end{itemize}

\subsection{Hyperparameters and Experimental Details}
\label{app:hyperparameters}
To ensure full reproducibility of our results, this section provides comprehensive information on the evaluation metrices, experimental setup, training configurations, and specific hyperparameters used for both our proposed model, \textbf{PIANO}, and all baseline models.

\subsubsection{Experimental Setup}
All experiments were conducted on a workstation equipped with NVIDIA A100 GPUs with GPU memory usage of $\sim$800MiB. Our codebase is implemented in PyTorch. For the PDE benchmark tasks, we trained all models on a discretized spatio-temporal grid of $200 \times 200$ collocation points. For evaluation and visualization, a distinct grid of $198 \times 198$ points was used. Although PIANO does not involve significant sources of randomness that affect performance, experiments are repeated with seeds 0 through 9 for multiple-run evaluations. For single-run experiments, seed 0 is used. We provide the source code in the attached zip file with software libraries and their versions in the .toml file. The source code will also be made public upon acceptance.

\subsubsection{Training Configuration}
PIANO was trained for 100K iterations using the AdamW optimizer with a learning rate of $3 \times 10^{-4}$ and a weight decay of $10^{-4}$. A cosine annealing learning rate scheduler was employed to gradually reduce the learning rate over the course of training. To stabilize optimization, gradient norms were clipped to a maximum of 1.0. Model weights were initialized using the Xavier uniform initialization strategy, and the best-performing model (based on training loss) was checkpointed and used for final evaluation. Throughout training, intermediate predictions were saved at regular intervals to support qualitative analysis. All spatial and temporal derivatives required for the PDE loss were approximated using second-order finite difference schemes applied to the model’s predicted solution grid. When enabled, Weights \& Biases (WandB) was used to log training losses, runtime, and gradient diagnostics. All models were trained on a uniform spatiotemporal grid, with analytically defined initial conditions specific to each PDE benchmark.

\subsubsection{Model Architecture Details}
Our proposed model, PIANO, is implemented as a State-Space Model (SSM) architecture. The key components and hyperparameters are detailed below, followed by those of the primary baselines.

\begin{itemize}
    \item \textbf{PIANO (Ours)}: Our model, referred to as `ssm` in our experiments, is an autoregressive state-space model.
    \begin{itemize}
        \item \textbf{Input}: At each time step $t$, the input vector is a concatenation of the spatial coordinate $x_t$, the temporal coordinate $t_t$, and the model's own prediction from the previous step, $u_{t-1}$.
        \item \textbf{Embedding Network}: A linear layer maps the concatenated input vector to a hidden state dimension of 256.
        \item \textbf{State Transition Network}: The core of our model consists of learnable matrices (\textbf{A, B, C, D}) that govern the state-space dynamics. The hidden state dimension is 256, and we use a SiLU (silu) activation function with Layer Normalization for stability.
        \item \textbf{PDE Probe}: A 2-layer MLP with a hidden dimension of 256 and a SiLU activation decodes the hidden state into the final solution at each time step.
    \end{itemize}
\end{itemize}

Table \ref{tab:hyperparams} summarizes the architectural hyperparameters for PIANO and the primary baseline models to ensure a fair comparison in terms of model capacity. We aimed to keep the total number of trainable parameters roughly comparable across the different architectures.

\begin{table}[t]
\centering
\resizebox{\columnwidth}{!}{%
\begin{tabular}{@{}llc@{}}
\toprule
\textbf{Model} & \textbf{Hyperparameter} & \textbf{Value} \\
\midrule
\multirow{3}{*}{\textbf{PINN / FLS}} & Hidden Layers & 4 \\
& Hidden Size & 512 \\
& Total Parameters & $\sim$527k \\
& GPU Memory & $\sim$1311MiB \\
\midrule
\multirow{3}{*}{\textbf{QRes}} & Hidden Layers & 4 \\
& Hidden Size & 256 \\
& Total Parameters & $\sim$397k \\
\midrule
\multirow{7}{*}{\textbf{PINNsFormer}} & Sequence Length ($k$) & 5 \\
& Time Step ($\Delta t$) & $10^{-4}$ \\
& Number of Encoders/Decoders & 1 \\
& Embedding Size & 32 \\
& Attention Heads & 2 \\
& Hidden Size & 512 \\
& Total Parameters & $\sim$454k \\
& GPU Memory & $\sim$2827MiB \\
\midrule
\multirow{5}{*}{\textbf{PINNMamba}} & Sequence Length ($k$) & 7 \\
& Time Step ($\Delta t$) & $10^{-2}$ \\
& Number of Encoders & 1 \\
& Embedding Size & 32 \\
& Total Parameters & $\sim$286k \\
& GPU Memory & $\sim$7899MiB \\
\midrule
\multirow{3}{*}{\textbf{PIANO (Ours)}} & State Dimension ($k$) & 256 \\
& Total Parameters & $\sim$330k \\
& GPU Memory & $\sim$800MiB \\
\bottomrule
\end{tabular}%
}
\caption{Architectural hyperparameters for PIANO and baseline models used in the PDE benchmarks.}
\label{tab:hyperparams}
\end{table}

\subsubsection{Guidance on hyperparameters} The architectural hyperparameters in PIANO, namely the state dimension ($k$) and the number of temporal rollout steps ($M$), are standard and do not introduce novel tuning complexities.  The value for $M$ is determined by the temporal resolution of the training grid (e.g., for a $200\times200$ spatio-temporal grid, $M=200$). To provide clear guidance for practitioners, we perform an empirical sensitivity analysis on these key parameters. We investigate the impact of the state dimension and the training grid resolution on the model's accuracy for the Reaction equation.

The results are presented in Table~\ref{tab:sensitivity_k} and Table~\ref{tab:sensitivity_grid}. We observe two clear trends:
\begin{itemize}
    \item Increasing the state dimension from 32 to 256 substantially reduces prediction error, demonstrating the benefit of higher model capacity for capturing the underlying dynamics of the PDE.
    \item Increasing the training grid resolution from $50 \times 50$ to $200 \times 200$ consistently improves performance, as a denser sampling of collocation points provides stronger and more complete physical constraints during training.
\end{itemize}
Based on these findings, we used a state dimension of $k=256$ and a grid size of $200 \times 200$ for our main experiments to ensure the highest accuracy. We do not go further as the error is close to zero.

\begin{table}[h!]
\centering
\begin{tabular}{@{}ccc@{}}
\toprule
\textbf{State Dimension ($k$)} & \textbf{rMAE ($\downarrow$)} & \textbf{rRMSE ($\downarrow$)} \\
\midrule
32 & 0.0145 & 0.0284 \\
64 & 0.0025 & 0.0047 \\
128 & 0.0016 & 0.0038 \\
\textbf{256} & \textbf{0.0001} & \textbf{0.0008} \\
\bottomrule
\end{tabular}
\caption{Sensitivity analysis of PIANO with respect to the state dimension ($k$) for the Reaction equation. The model is trained on a $200 \times 200$ grid. Errors decrease as the state dimension increases.}
\label{tab:sensitivity_k}
\end{table}

\begin{table}[h!]
\centering
\begin{tabular}{@{}ccc@{}}
\toprule
\textbf{Grid Size} & \textbf{rMAE ($\downarrow$)} & \textbf{rRMSE ($\downarrow$)} \\
\midrule
$50 \times 50$ & 0.0017 & 0.0047 \\
$100 \times 100$ & 0.0005 & 0.0009 \\
\textbf{$200 \times 200$} & \textbf{0.0001} & \textbf{0.0008} \\
\bottomrule
\end{tabular}
\caption{Sensitivity analysis of PIANO with respect to the training grid resolution for the Reaction equation. The model uses a fixed state dimension of $k=256$. Performance improves with a finer grid.}
\label{tab:sensitivity_grid}
\end{table}

 \subsubsection{Evaluation Metrics}
\label{app:metrics}
To quantitatively assess the accuracy of our model and the baselines, we employ two primary evaluation metrics: the relative Mean Absolute Error (rMAE) and the relative Root Mean Squared Error (rRMSE). These metrics are standard and widely adopted throughout the PINN research literature, ensuring our results are comparable with prior and future work \citep{xu2025sub, wu2024ropinn, zhao2024pinnsformer}.

We use relative error metrics instead of absolute ones (e.g., MAE or RMSE) because they provide a scale-invariant measure of performance. Absolute errors are dependent on the magnitude of the PDE's solution; a physically correct model for a high-magnitude field (like pressure) could have a large absolute error, while a poor model for a normalized field (like concentration) could have a small one. Relative errors normalize the error by the magnitude of the true solution, providing a dimensionless percentage that is directly comparable across different PDEs, scales, and physical units. This is essential for a robust and generalizable evaluation.

Given a set of $N$ test points, the model's prediction $\hat{u}(x_n, t_n)$, and the ground truth analytical solution $u(x_n, t_n)$, the metrics are formulated as follows:

\begin{equation}
    \text{rMAE} = \frac{\sum_{n=1}^{N} | \hat{u}(x_n, t_n) - u(x_n, t_n) |}{\sum_{n=1}^{N} | u(x_n, t_n) |}
    \label{eq:rmae}
\end{equation}

\begin{equation}
    \text{rRMSE} = \sqrt{\frac{\sum_{n=1}^{N} | \hat{u}(x_n, t_n) - u(x_n, t_n) |^2}{\sum_{n=1}^{N} | u(x_n, t_n) |^2}}
    \label{eq:rrmse}
\end{equation}

\subsubsection{Complexity Analysis}
We provide a complexity analysis to position PIANO relative to its baselines in terms of computational and memory overhead, with model details summarized in Table~\ref{tab:hyperparams}. Standard MLP-based PINNs serve as an efficient baseline, requiring approximately 1311\,MiB of GPU memory. In contrast, advanced sequential models designed to capture temporal dependencies incur significantly greater costs. PINNsFormer, based on a Transformer architecture, exhibits quadratic computational and memory complexity with respect to sequence length $M$, i.e., $\mathcal{O}(M^2)$, due to its self-attention mechanism. This makes it inherently inefficient for long sequences and results in a memory footprint of around 2827\,MiB, more than twice that of a standard PINN, along with nearly three times the computational cost. Similarly, PINNMamba adopts a state-space architecture that theoretically scales linearly as $\mathcal{O}(M)$, but in practice incurs a very high memory footprint of approximately 7899\,MiB and nearly seven times the time per iteration of a standard PINN. This overhead stems from its reliance on short sub-sequences (e.g., $k=7$) and a complex sub-sequence contrastive alignment loss. PIANO, while also a sequential model, is explicitly designed for efficient long-range modeling. Like PINNMamba, it scales linearly with $M$ in theory. However, it avoids the need for sub-sequence processing and specialized losses. Instead, it processes the entire temporal rollout as a single long sequence (e.g., $M=200$ in our experiments) using a streamlined autoregressive architecture. As shown in Table~\ref{tab:hyperparams}, this results in a substantially lower memory usage of approximately 800\,MiB, which is lower than both PINNMamba and even the MLP-based PINN baseline. While PIANO's autoregressive nature introduces a modest increase in computational cost per iteration compared to MLPs, it remains comparable to other sequential baselines without incurring their prohibitive memory overhead. Overall, PIANO achieves an effective balance between expressive power, scalability, and resource efficiency, making it well suited for simulating long-horizon dynamical systems.

\subsection{Additional Qualitative Results}
\label{app:qualitative}

\subsubsection{Training Dynamics}
To provide further insight into the autoregressive behavior of PIANO, we visualize its training dynamics across different PDEs. Figures~\ref{fig:appendix-wave}--\ref{fig:appendix-reaction-full} show spacetime predictions \( u(x, t) \) at various stages of training (5\%, 25\%, 50\%, and 100\%), illustrating how the solution progressively improves. These visualizations highlight how the model gradually reconstructs the full trajectory by propagating the known initial condition forward in time using its own predictions.

For the transport-dominated problems like the Convection equation (main paper Figure~\ref{fig:training_dynamics}) highlight PIANO’s ability to preserve sharp wavefronts over long horizons. This demonstrates the model's robustness in avoiding numerical diffusion, an issue that often affects other PINN-based methods.

The Wave equation (Figure~\ref{fig:appendix-wave}) requires learning second-order oscillatory dynamics, making convergence more gradual. Early in training, the model underfits both amplitude and phase. However, by the end of training, PIANO successfully recovers the full oscillatory structure, without suffering from phase drift or artificial dispersion. The solution is slowly and correctly propagated from the initial stages to the later.

For the Heat equation (Figure~\ref{fig:appendix-heat}), PIANO converges extremely rapidly. As a diffusion-dominated PDE, the solution is smooth and stable, allowing the model to reach near-zero error by 50\% of training.

The Reaction equation, shown in Figures~\ref{fig:appendix-reaction} and~\ref{fig:appendix-reaction-full}, provides a particularly informative case for analyzing autoregressive propagation. Although convergence is still relatively fast, we include an extended grid of training snapshots to illustrate how the model handles the nonlinearity and exponential growth in the solution. Early predictions accurately reconstruct the region near \( t = 0 \), where the initial condition serves as an anchor. As training progresses, the predicted solution propagates deeper into the temporal domain, revealing how PIANO gradually builds up the full dynamics through stable, recursive conditioning. This dense visualization helps expose the internal mechanics of the autoregressive learning process.

Overall, these qualitative results confirm that PIANO maintains stable and physically consistent rollout behavior across a wide range of PDE types, from diffusive to oscillatory, and from linear to nonlinear dynamics.

\begin{figure*}[ht]
    \centering
    \includegraphics[width=\linewidth]{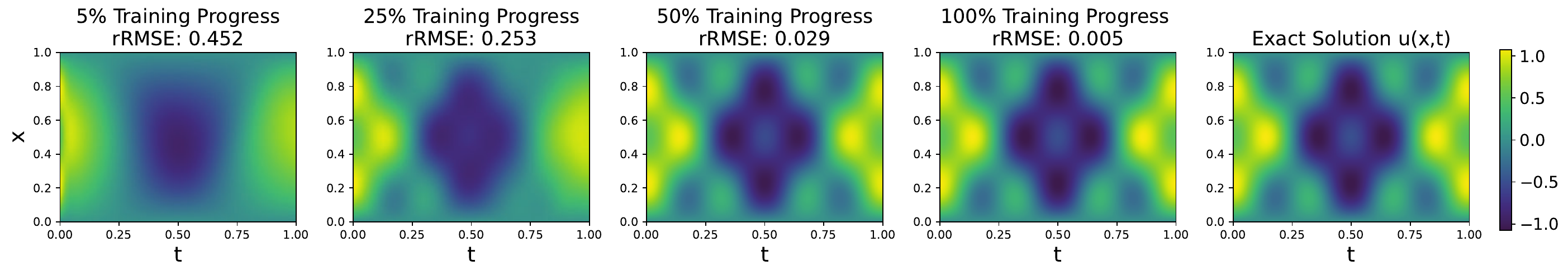}
    \caption{Training progression of PIANO on the wave equation. Each subplot shows the predicted spacetime solution \( u(x, t) \) at different training stages, with the exact solution on the far right. The model progressively learns the oscillatory structure of the solution, despite the complexity of second-order dynamics. Accurate predictions emerge early near the initial condition and gradually propagate forward in time through the autoregressive rollout. By 100\% progress (rRMSE: 0.005), the model closely matches the exact solution across the domain.}
    \label{fig:appendix-wave}
\end{figure*}

\begin{figure*}[ht]
    \centering
    \includegraphics[width=\linewidth]{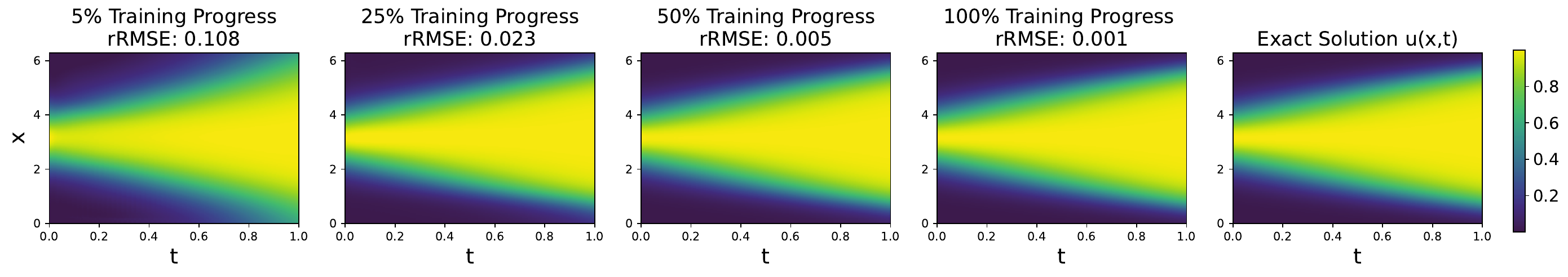}
    \caption{Training progression of PIANO on a reaction equation. Each subplot shows the predicted spacetime solution \( u(x, t) \) at different training stages, with the exact solution on the far right. At 5\% progress, the model captures the coarse global structure but underestimates the peak and shows boundary errors. As training proceeds, accurate predictions emerge first near the initial state and gradually propagate forward in time through the autoregressive rollout. By 100\% progress (rRMSE: 0.0008), the model closely matches the exact solution. Extended dynamics are shown in Figure~\ref{fig:appendix-reaction-full}.}
    \label{fig:appendix-reaction}
\end{figure*}

\begin{figure*}[ht]
    \centering
    \includegraphics[width=\linewidth]{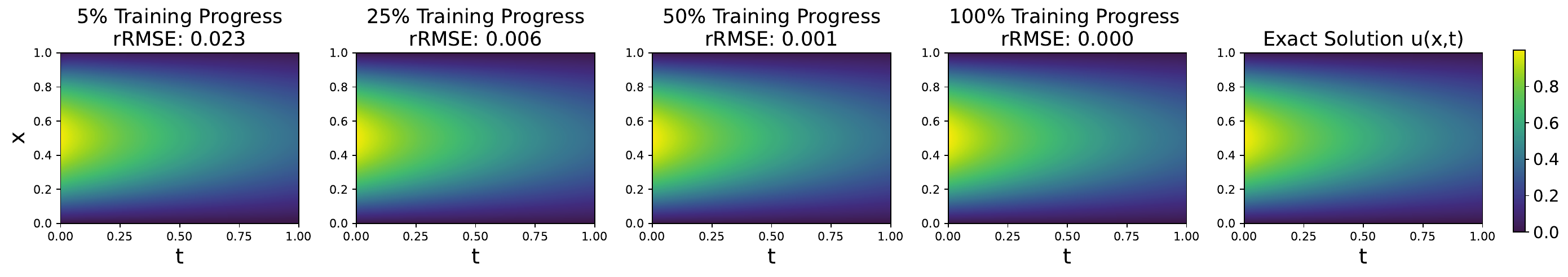}
    \caption{Training progression of PIANO on the heat equation. Each subplot shows the predicted spacetime solution \( u(x, t) \) at different training stages, with the exact solution on the far right. Due to the diffusive nature of the equation, the model converges rapidly, capturing the correct solution structure early in training. Accurate dynamics propagate smoothly from the initial condition, with near-perfect agreement reached by 50\% progress (rRMSE: 0.001).}
    \label{fig:appendix-heat}
\end{figure*}

\begin{figure*}[ht]
    \centering
    \includegraphics[width=\linewidth]{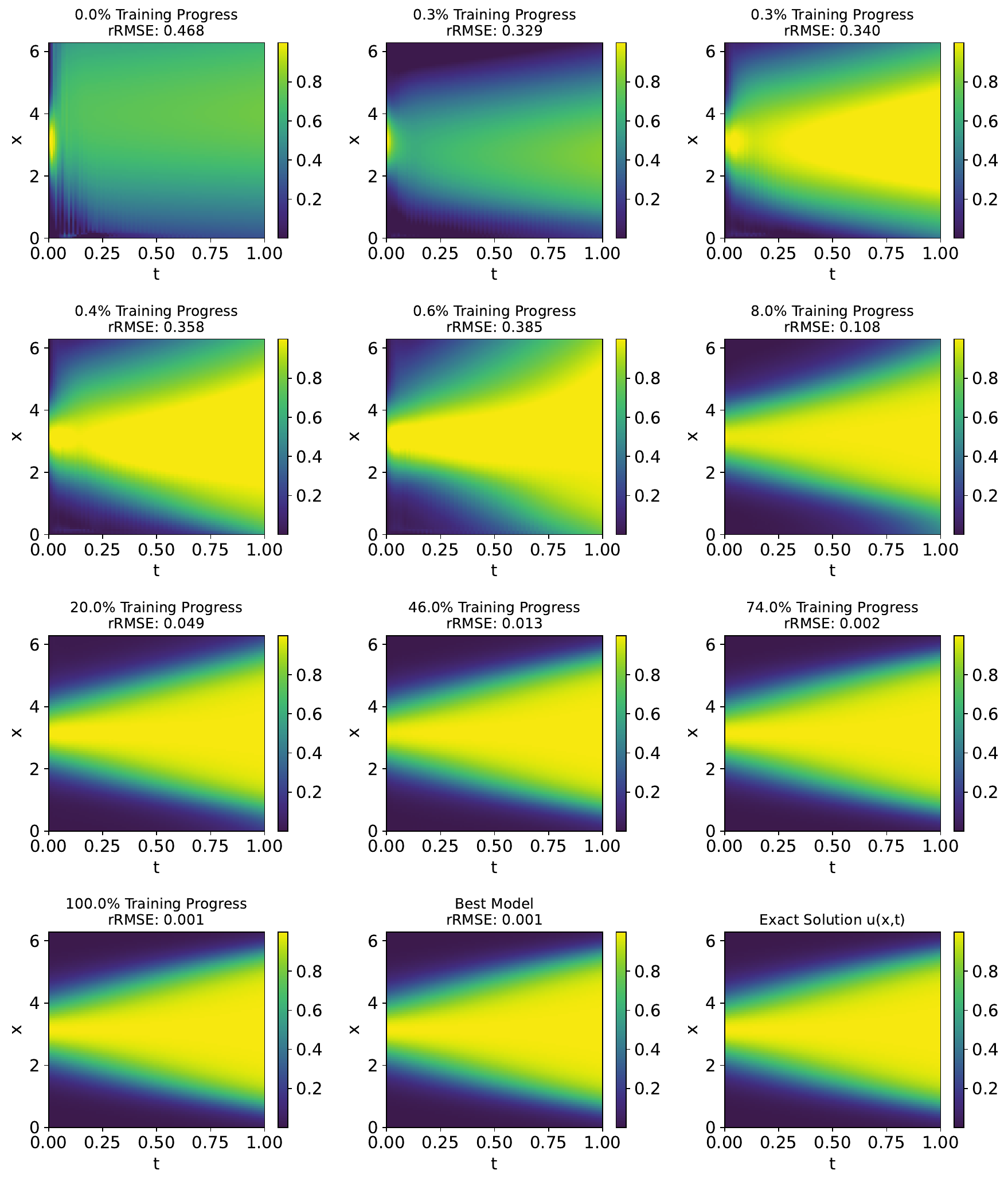}
    \caption{Extended training dynamics of PIANO on the reaction equation. This snapshot grid shows how accurate solution structures first emerge near the initial condition and progressively propagate forward in time as training advances. The gradual refinement across training stages illustrates the model's ability to learn stable temporal evolution through its autoregressive rollout.}
    \label{fig:appendix-reaction-full}
\end{figure*}

\subsubsection{Temporal Profile}

To further analyze the temporal behavior learned by PIANO, we visualize in Figures~\ref{fig:wave_oscillations} and~\ref{fig:convection_temporal_profiles} the predicted time profiles \( u(x_i, t) \) across several fixed spatial locations \( x_i \), at different stages of training for the Wave and Convection equations, respectively. Each subplot corresponds to a particular training progress percentage, with colored curves representing different spatial points.

For the Wave equation, the model initially struggles to capture the high-frequency oscillations, exhibiting distorted amplitudes and poor phase alignment. As training progresses, PIANO gradually learns to reconstruct both the amplitude and frequency content of the wave. Notably, improvements emerge near the initial time and propagate forward, consistent with PIANO’s autoregressive rollout structure. This recursive conditioning enables the model to build the solution step by step, avoiding phase drift and dispersion commonly seen in non-autoregressive methods. By 100\% training, the predicted oscillations closely match the ground truth in amplitude, frequency, and phase across all \( x_i \), indicating that PIANO has successfully learned the global wave dynamics in a stable and physically consistent manner.

The Convection equation presents a different challenge due to its transport-dominated nature. In the early stages of training, predictions are only accurate near the initial time, while further regions suffer from amplitude decay and misaligned phase. However, as training advances, PIANO progressively learns to propagate the sharp waveform forward in time. By conditioning each step on its own prior predictions, the model gradually sharpens the solution and aligns the oscillatory phase across all spatial locations. By the final stages of training, PIANO maintains the structure and timing of the waveform without numerical diffusion, demonstrating its robustness in capturing transport dynamics through stable temporal propagation.

These visualizations confirm that PIANO not only handles oscillatory PDEs like the Wave equation but also excels in transport-heavy regimes like Convection, leveraging its autoregressive architecture to deliver stable and accurate long-term predictions.

\begin{figure*}[ht]
    \centering
    \includegraphics[width=\linewidth]{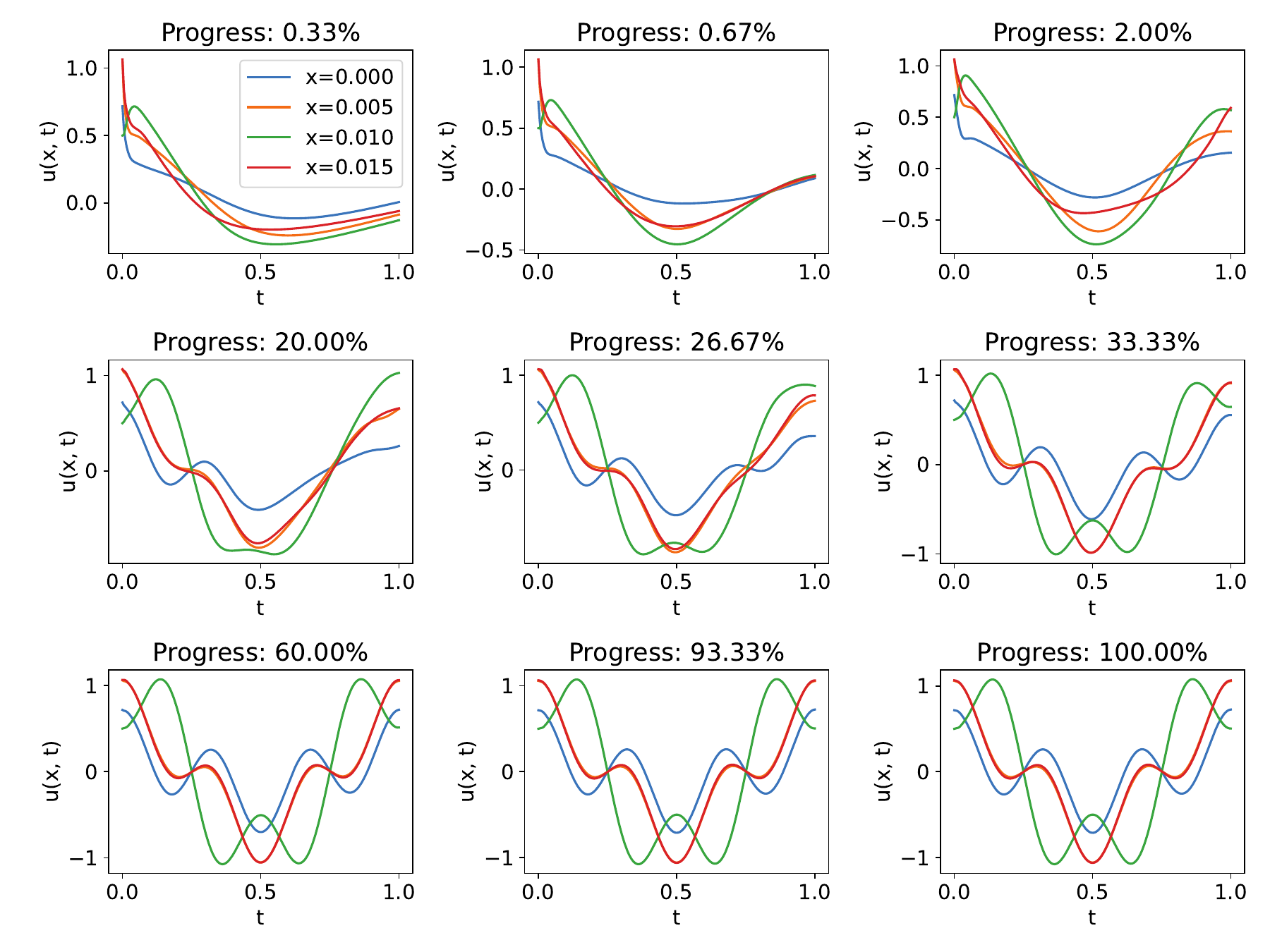}
    \caption{Temporal profile of the Wave equation at various training stages. Each subplot shows the predicted time evolution \( u(x_i, t) \) for multiple spatial points \( x_i \), plotted as separate curves. Early in training (top row), the model captures only coarse low-frequency behavior and underestimates amplitude. As training progresses, PIANO improves its ability to preserve the phase and frequency content of oscillations across all spatial locations. This gradual sharpening of periodic structure demonstrates the autoregressive nature of the model: accurate dynamics emerge near the initial time and propagate forward as the model recursively builds on its own predictions. By the final stages, the predicted waveforms closely match in both phase and amplitude across the entire domain, confirming stable temporal learning.}
    \label{fig:wave_oscillations}
\end{figure*}

\begin{figure*}[ht]
    \centering
    \includegraphics[width=\linewidth]{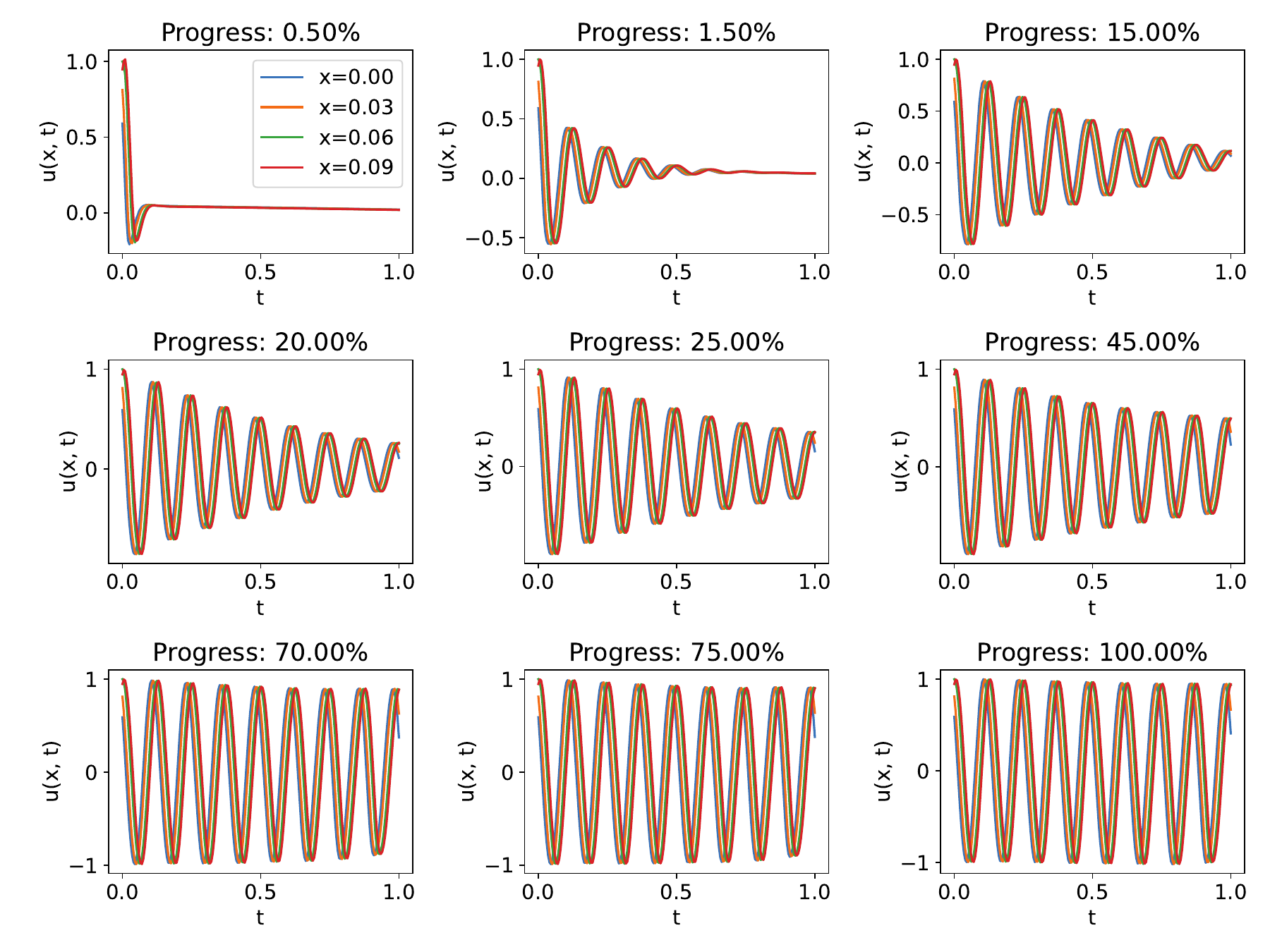}
    \caption{Temporal profiles of the Convection equation at different training stages. Each subplot shows the predicted evolution \( u(x_i, t) \) over time for several fixed spatial locations \( x_i \). Early predictions are accurate only near the initial time, but fail to preserve the wave shape across the domain. As training progresses, PIANO gradually learns to propagate the waveform forward in time, maintaining sharpness and phase alignment. By 100\% training progress, the model stably reproduces the full dynamics, demonstrating robust temporal consistency under transport-dominated behavior.}
    \label{fig:convection_temporal_profiles}
\end{figure*}

\section{Weather Forecasting}
\label{app:weather_forecasting}

\subsection{Evaluation Metrics}

We assess model performance using two standard meteorological metrics: latitude-weighted Root Mean Squared Error (RMSE) and Anomaly Correlation Coefficient (ACC), computed after de-normalizing the predictions.

\begin{equation}
\text{RMSE} = \frac{1}{N} \sum_{t=1}^{N} \left[ \frac{1}{HW} \sum_{h=1}^{H} \sum_{w=1}^{W} \alpha(h) (y_{thw} - u_{thw})^2 \right]^{1/2}
\end{equation}
\begin{equation}
\text{ACC} = \frac{\sum_{t,h,w} \alpha(h)\tilde{y}_{thw}\tilde{u}_{thw}}{\sqrt{\sum_{t,h,w} \alpha(h)\tilde{y}^2_{thw} \sum_{t,h,w} \alpha(h)\tilde{u}^2_{thw}}}
\end{equation}

Here, \( y_{thw} \) and \( u_{thw} \) denote the ground truth and model prediction at time \( t \), latitude index \( h \), and longitude index \( w \), respectively. The term \( \alpha(h) = \cos(h) \big/ \frac{1}{H} \sum_{h'} \cos(h') \) represents the normalized latitude weight, accounting for the area distortion in latitude-longitude grids due to Earth's curvature.

The anomalies are computed by subtracting the empirical mean:
\[
\tilde{y}_{thw} = y_{thw} - C, \quad \tilde{u}_{thw} = u_{thw} - C, \quad 
\]

where $C = \frac{1}{N} \sum_{t} y_{thw}$.

ACC measures the correlation between predicted and true anomalies. Higher ACC indicates better skill in capturing deviations from climatological means. Latitude-weighted RMSE evaluates the spatial accuracy of forecasts while correcting for latitudinal area distortion. Lower RMSE and higher ACC both indicate better forecasting performance.

\subsection{Experimental Setup and Implementation}
Our experimental framework directly mirrors the setup used by ClimODE to ensure a fair comparison. The primary task is forecasting future atmospheric states based on an initial state, with lead times ranging from 6 to 36 hours. The model is implemented in PyTorch. The underlying system of ODEs is solved using the Euler method with a time resolution of 1 hour, managed by the `torchdiffeq` library. All experiments are conducted on a single NVIDIA A100 GPU with 40GB of memory. The model is trained for 300 epochs using a batch size of 8. The learning rate is managed by a Cosine Annealing scheduler.

\subsection{Dataset and Preprocessing}
We use the ERA5 dataset, as preprocessed for the WeatherBench benchmark. The data is provided at a 5.625° spatial resolution with a 6-hour time increment. Our experiments focus on five key variables: 2-meter temperature (\texttt{t2m}), temperature at 850 hPa (\texttt{t}), geopotential at 500 hPa (\texttt{z}), and the 10-meter U and V wind components (\texttt{u10}, \texttt{v10}). All variables are normalized to a [0, 1] range using min-max scaling. The dataset is partitioned by year, with 2006-2015 used for training, 2016 for validation, and 2017-2018 for testing.

\subsection{Results}
We evaluate PIANO's ability to forecast global weather variables using the ERA5 dataset. Figure~\ref{fig:forecast_earth} provides a visual analysis of PIANO’s probabilistic predictions at a fixed forecast time (2017-01-01T12:00) across five key atmospheric variables: geopotential height at 500 hPa (\texttt{z}), temperature at 850 hPa (\texttt{t}), 2-meter surface temperature (\texttt{t2m}), and the 10-meter U and V wind components (\texttt{u10}, \texttt{v10}). Each row corresponds to a variable, while the columns show the predicted mean (\( \mu \)), upper bound (\( \mu + \sigma \)), predicted standard deviation (\( \sigma \)), and pointwise error. These results demonstrate that PIANO not only captures the spatial structure of each variable, but also quantifies predictive uncertainty effectively, with visually low error and consistent uncertainty estimates across regions.

Quantitative results are summarized in Table~\ref{tab:full_results_comparison_weather}, where we report latitude-weighted RMSE and Anomaly Correlation Coefficient (ACC) at multiple forecast lead times, comparing PIANO against several strong neural and numerical baselines, including NODE, ClimaX, FCN, IFS, and ClimODE. Across all variables and lead times, PIANO achieves state-of-the-art performance, often outperforming neural baselines by a significant margin and in some cases approaching the accuracy of IFS. The model shows particularly strong gains in mid-range horizons (12–24 hours), maintaining high correlation and low error while producing calibrated uncertainty estimates. These results highlight the benefit of PIANO’s autoregressive, physics-informed structure for long-range, high-resolution weather modeling.

\begin{figure*}
    \centering
    \includegraphics[width=\linewidth]{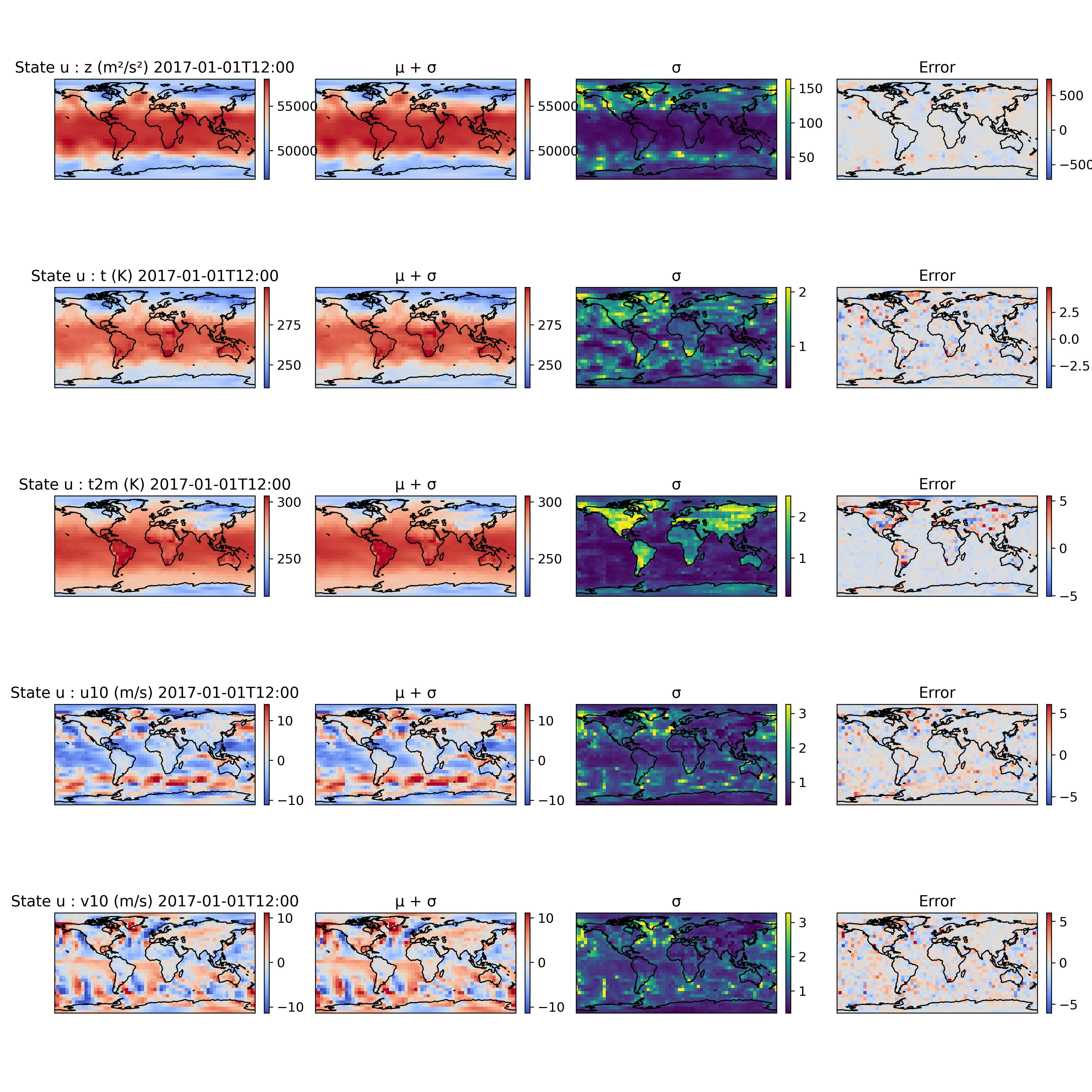}
    \caption{Visualization of PIANO’s forecasting capabilities for five atmospheric state variables on 2017-01-01 at 12:00. Each row corresponds to a different variable: geopotential (\texttt{z}), atmospheric temperature (\texttt{t}), 2-meter surface temperature (\texttt{t2m}), and the 10-meter U-wind (\texttt{u10}) and V-wind (\texttt{v10}) components. The columns show (from left to right): the predicted mean \( \mu \), upper bound \( \mu + \sigma \), predicted standard deviation \( \sigma \), and prediction error (difference from ground truth). PIANO captures spatial structure and uncertainty across variables, with low errors. The errors are more pronounced where PIANO already suggests uncertainty ($\sigma$).}
    \label{fig:forecast_earth}
\end{figure*}

\begin{table*}
\resizebox{\textwidth}{!}{%
\begin{tabular}{llcccccc|cccccc}
\toprule
& & \multicolumn{6}{c}{\textbf{RMSE($\downarrow$)}} & \multicolumn{6}{c}{\textbf{ACC($\uparrow$)}} \\
\cmidrule(lr){3-8} \cmidrule(lr){9-14}
\textbf{Variable} & \textbf{Lead-Time (hrs)} & \textbf{NODE} & \textbf{ClimaX} & \textbf{FCN} & \textbf{IFS} & \textbf{ClimODE} & \textbf{PIANO (Ours)} & \textbf{NODE} & \textbf{ClimaX} & \textbf{FCN} & \textbf{IFS} & \textbf{ClimODE} & \textbf{PIANO (Ours)} \\
\midrule
\multirow{5}{*}{\textbf{z}} 
 & 6 & 300.64 & 247.5 & 149.4 & 26.9 & $102.9 \pm 9.3$ & $69.07 \pm 4.99$ & 0.96 & 0.97 & 0.99 & 1.00 & 0.99 & 1.00 \\
 & 12 & 460.23 & 265.4 & 217.8 & (N/A) & $134.8 \pm 12.3$ & $109.07 \pm 8.30$ & 0.88 & 0.96 & 0.99 & (N/A) & 0.99 & 0.99 \\
 & 18 & 627.65 & 319.8 & 275.0 & (N/A) & $162.7 \pm 14.4$ & $145.99 \pm 11.95$ & 0.79 & 0.95 & 0.99 & (N/A) & 0.98 & 0.99 \\
 & 24 & 877.82 & 364.9 & 333.0 & 51.0 & $193.4 \pm 16.3$ & $185.22 \pm 15.91$ & 0.70 & 0.93 & 0.99 & 1.00 & 0.98 & 0.98 \\
 & 36 & 1028.20 & 455.0 & 449.0 & (N/A) & $259.6 \pm 22.3$ & $263.44 \pm 22.96$ & 0.55 & 0.89 & 0.99 & (N/A) & 0.96 & 0.97 \\
\midrule
\multirow{5}{*}{\textbf{t}} 
 & 6 & 1.82 & 1.64 & 1.18 & 0.69 & $1.16 \pm 0.06$ & $0.92 \pm 0.04$ & 0.94 & 0.94 & 0.99 & 0.99 & 0.97 & 0.98 \\
 & 12 & 2.32 & 1.77 & 1.47 & (N/A) & $1.32 \pm 0.13$ & $1.16 \pm 0.05$ & 0.85 & 0.93 & 0.99 & (N/A) & 0.96 & 0.97 \\
 & 18 & 2.93 & 1.93 & 1.65 & (N/A) & $1.47 \pm 0.16$ & $1.32 \pm 0.06$ & 0.77 & 0.92 & 0.99 & (N/A) & 0.96 & 0.96 \\
 & 24 & 3.35 & 2.17 & 1.83 & 0.87 & $1.55 \pm 0.18$ & $1.48 \pm 0.07$ & 0.72 & 0.90 & 0.99 & 0.99 & 0.95 & 0.96 \\
 & 36 & 4.13 & 2.49 & 2.21 & (N/A) & $1.75 \pm 0.26$ & $1.76 \pm 0.09$ & 0.58 & 0.86 & 0.99 & (N/A) & 0.94 & 0.94 \\
\midrule
\multirow{5}{*}{\textbf{t2m}} 
 & 6 & 2.72 & 2.02 & 1.28 & 0.97 & $1.21 \pm 0.09$ & $1.01 \pm 0.05$ & 0.82 & 0.92 & 0.99 & 0.99 & 0.97 & 0.98 \\
 & 12 & 3.16 & 2.26 & 1.48 & (N/A) & $1.45 \pm 0.10$ & $1.20 \pm 0.09$ & 0.68 & 0.90 & 0.99 & (N/A) & 0.96 & 0.97 \\
 & 18 & 3.45 & 2.45 & 1.61 & (N/A) & $1.43 \pm 0.09$ & $1.29 \pm 0.08$ & 0.69 & 0.88 & 0.99 & (N/A) & 0.96 & 0.97 \\
 & 24 & 3.86 & 2.37 & 1.68 & 1.02 & $1.40 \pm 0.09$ & $1.42 \pm 0.10$ & 0.79 & 0.89 & 0.99 & 0.99 & 0.96 & 0.96 \\
 & 36 & 4.17 & 2.87 & 1.90 & (N/A) & $1.70 \pm 0.15$ & $1.68 \pm 0.15$ & 0.49 & 0.83 & 0.99 & (N/A) & 0.94 & 0.94 \\
\midrule
\multirow{5}{*}{\textbf{u10}} 
 & 6 & 2.30 & 1.58 & 1.47 & 0.80 & $1.41 \pm 0.07$ & $1.24 \pm 0.06$ & 0.85 & 0.92 & 0.95 & 0.98 & 0.91 & 0.95 \\
 & 12 & 3.13 & 1.96 & 1.89 & (N/A) & $1.81 \pm 0.09$ & $1.53 \pm 0.07$ & 0.70 & 0.88 & 0.93 & (N/A) & 0.89 & 0.93 \\
 & 18 & 3.41 & 2.24 & 2.05 & (N/A) & $1.97 \pm 0.11$ & $1.74 \pm 0.07$ & 0.58 & 0.84 & 0.91 & (N/A) & 0.88 & 0.91 \\
 & 24 & 4.10 & 2.49 & 2.33 & 1.11 & $2.01 \pm 0.10$ & $1.96 \pm 0.09$ & 0.50 & 0.80 & 0.89 & 0.97 & 0.87 & 0.88 \\
 & 36 & 4.68 & 2.98 & 2.87 & (N/A) & $2.25 \pm 0.18$ & $2.35 \pm 0.12$ & 0.35 & 0.69 & 0.85 & (N/A) & 0.83 & 0.83 \\
\midrule
\multirow{5}{*}{\textbf{v10}} 
 & 6 & 2.58 & 1.60 & 1.54 & 0.94 & $1.53 \pm 0.08$ & $1.30 \pm 0.06$ & 0.81 & 0.92 & 0.94 & 0.98 & 0.92 & 0.95 \\
 & 12 & 3.19 & 1.97 & 1.81 & (N/A) & $1.81 \pm 0.12$ & $1.58 \pm 0.07$ & 0.61 & 0.88 & 0.91 & (N/A) & 0.89 & 0.92 \\
 & 18 & 3.58 & 2.26 & 2.11 & (N/A) & $1.96 \pm 0.16$ & $1.79 \pm 0.08$ & 0.46 & 0.83 & 0.86 & (N/A) & 0.88 & 0.90 \\
 & 24 & 4.07 & 2.48 & 2.39 & 1.33 & $2.04 \pm 0.10$ & $2.01 \pm 0.09$ & 0.35 & 0.80 & 0.83 & 0.97 & 0.86 & 0.88 \\
 & 36 & 4.52 & 2.98 & 2.95 & (N/A) & $2.29 \pm 0.24$ & $2.40 \pm 0.13$ & 0.29 & 0.69 & 0.75 & (N/A) & 0.83 & 0.82 \\
\bottomrule
\end{tabular}}
\caption{Latitude weighted RMSE($\downarrow$) and Anomaly Correlation Coefficient (ACC($\uparrow$)) comparison with baselines on global forecasting on the ERA5 dataset. PIANO generally outperforms all the neural baselines.}
\label{tab:full_results_comparison_weather}
\end{table*}
\end{document}